\DeclareMathOperator*{\argmax}{arg\,max}
\DeclareMathOperator{\E}{\mathbb{E}}
\newtheorem{theorem}{Theorem}
\title{GRAC: Self-Guided and Self-Regularized Actor-Critic}
\author{%
  Lin Shao$^{1}$, Yifan You$^{2}$, Mengyuan Yan$^{1}$, Qingyun Sun$^{3}$, Jeannette Bohg$^{1}$\\
  $^{1}$Stanford AI Lab, $^{3}$Department of Mathematics\\
  Stanford University\\
 \texttt{\{lins2, mengyuan, qysun, bohg\}@stanford.edu} \\
  $^{2}$Department of Computer Science\\
  University of California, Los Angeles\\
 \texttt{harry473417@ucla.edu} \\
\thanks{Toyota Research Institute ("TRI") provided funds to assist the authors with their research but this article solely reflects the opinions and conclusions of its authors and not TRI or any other Toyota entity. This article is also supported by Siemens and the HAI-AWS Cloud Credits.} 
}
\begin{document}

\maketitle

\begin{abstract}
Deep reinforcement learning~(DRL) algorithms have successfully been demonstrated on a range of challenging decision making and control tasks. One dominant component of recent deep reinforcement learning algorithms is the target network which mitigates the divergence when learning the Q function. However, target networks can slow down the learning process due to delayed function updates. Our main contribution in this work is a self-regularized TD-learning method to address divergence without requiring a target network. Additionally, we propose a self-guided policy improvement method by combining policy-gradient with zero-order optimization to search for actions associated with higher Q-values in a broad neighborhood. This makes learning more robust to local noise in the Q function approximation and guides the updates of our actor network. Taken together, these components define \emph{GRAC}, a novel self-guided and self-regularized actor critic algorithm. We evaluate \emph{GRAC} on the suite of OpenAI gym tasks, achieving or outperforming state of the art in every environment tested.

\end{abstract}

\section{Introduction}
\vspace{-5pt}
Reinforcement learning~(RL) studies decision-making with the goal of maximizing total discounted reward when interacting with an environment. Leveraging high-capacity function approximators such as neural networks, Deep reinforcement learning~(DRL) algorithms have been successfully applied to a range of challenging domains, from video games~\cite{mnih2013playing} to robotic control~\cite{schulman2015trust}. 

\vspace{-3pt}
Actor-critic algorithms are among the most popular approaches in DRL, e.g. \emph{DDPG}~\cite{lillicrap2015continuous}, \emph{TRPO}~\cite{schulman2015trust}, \emph{TD3}~\cite{fujimoto2018addressing} and \emph{SAC}~\cite{haarnoja2018soft}. These methods are based on policy iteration, which alternates between policy evaluation and policy improvement~\cite{sutton1998introduction}. Actor-critic methods jointly optimize the value function (critic) and the policy (actor) as it is often impractical to run either of these to convergence~\cite{haarnoja2018soft}. 

\vspace{-3pt}
In DRL, both the actor and critic use deep neural networks as the function approximator. However, DRL is known to assign unrealistically high values to state-action pairs represented by the Q-function. This is detrimental to the quality of the greedy control policy derived from Q~\cite{van2018deep}. Mnih~\etal~\cite{mnih2015human} proposed to use a  {\em target network\/} to mitigate divergence. A target network is a copy of the current Q function that is held fixed to serve as a stable target within the TD error update. The parameters of the target network are either infrequently copied~\cite{mnih2015human} or obtained by Polyak averaging~\cite{lillicrap2015continuous}. A limitation of using a target network is that it can slow down learning due to delayed function updates. We propose an approach that reduces the need for a target network in DRL while still ensuring stable learning and good performance in high-dimensional domains. We add a self-regularization term to encourage small changes to the target value while minimizing the Temporal Difference~(TD)-error~\cite{sutton1998introduction}. 

\vspace{-3pt}
Evolution Strategies~(ES) are a family of black-box optimization algorithms which are typically very stable, but scale poorly in high-dimensional search spaces, (e.g. neural networks)~\cite{pourchot2018cemrl}. Gradient-based DRL methods are often sample efficient, particularly in the off-policy setting when, unlike evolutionary search methods, they can continue to sample previous experiences to improve value estimation. But these approaches can also be unstable and highly sensitive to hyper-parameter tuning~\cite{pourchot2018cemrl}. We propose a novel policy improvement method which combines both approaches to get the best of both worlds. Specifically, after the actor network first outputs an initial action, we apply the {\em Cross Entropy Method\/}~(CEM)~\cite{Rubinstein1999CEM}~\cite{shao2020learning}~\cite{shao2020concept} to search the neighborhood of the initial action to find a second action associated with a higher Q value. Then we leverage the second action in the policy improvement stage to speed up the learning process. 

To mitigate the overestimation issue in Q learning~\cite{Thrun+Schwartz:1993}, Fujimoto \etal~\cite{fujimoto2018addressing} proposed Clipped Double Q-Learning in which the authors learn two Q-functions and use the smaller one to form the targets in the TD-Learning process. This method may suffer from under-estimation~\cite{fujimoto2018addressing}. In practice, we also observe that the discrepancy between the two Q-functions can increase dramatically which hinders the learning process. We propose Max-min Double Q-Learning to address this discrepancy. 

Our main contribution in this work is a \textbf{self-regularized} TD-learning method to address divergence without requiring a target network that may slow down learning progress. In addition, we propose a \textbf{self-guided} policy improvement method which combines policy-gradients and zero-order optimization. This helps to speed up learning and is robust to local noise in the Q function approximation. Taken together, these components define \emph{GRAC}, a novel self-guided and self-regularized actor critic algorithm. We evaluate \emph{GRAC} on six continuous control domains from OpenAI gym~\cite{brockman2016openai}, where we achieve or outperform state of the art result in every environment tested. We run our experiments across a large number of seeds with fair evaluation metrics~\cite{duan2016benchmarking}, perform extensive ablation studies, and open source both our code and learning curves.

\section{Related Work}
The proposed algorithm incorporates three key ingredients within the actor-critic method: a self-regularized TD update, self-guided policy improvements based on evolution strategies, and  Max-min double Q-Learning. In this section, we review prior work related to these ideas.
\vspace{-7pt}
\paragraph{Divergence in Deep Q-Learning}
In Deep Q-Learning, we use a nonlinear function approximator such as a neural network to approximate the Q-function that represents the value of each state-action pair. Learning the Q-function in this way is known to suffer from divergence issues~\cite{tsitsiklis1997analysis} such as assigning unrealistically high values to state-action pairs~\cite{van2018deep}. This is detrimental to the quality of the greedy control policy derived from Q~\cite{van2018deep}. To mitigate the divergence issue, Mnih~\etal\cite{mnih2015human} introduce a target network which is a copy of the estimated Q-function and is held fixed to serve as a stable target for some number of steps.  However, target networks can slow down the learning process due to delayed function updates~\cite{ijcai2019-379}. Durugkar~\etal~\cite{NIPS17-ishand} propose Constrained Q-Learning, which uses a constraint to prevent the average target value from changing after an update. Achiam~\etal\cite{achiam2019towards} give a simple analysis based on a linear approximation of the Q function and develop a stable Deep Q-Learning algorithm for continuous control without target networks. However, their proposed method requires separately calculating backward passes for each state-action pair in the batch, and solving a system of equations at each timestep. 
Peng~\etal~\cite{peng2019advantage} uses Monte Carlo instead of TD error to update the Q network and removes the need for a target network. Our proposed \emph{GRAC} algorithm adds a self-regularization term to the TD-Learning objective to keep the change of the state-action value small. 

\vspace{-4pt}
\paragraph{Evolution Strategies in Deep Reinforcement Learning}
{\em Evolution Strategies\/} (ES) are a family of black-box optimization algorithms which are typically very stable, but scale poorly in high-dimensional search spaces. Gradient-based deep RL methods, such as \emph{DDPG}~\cite{lillicrap2015continuous}, are often sample efficient, particularly in the off-policy setting. These off-policy methods can continue to reuse previous experience to improve value estimations but can be unstable and highly sensitive to hyper-parameter tuning~\cite{pourchot2018cemrl}. Researchers have proposed to combine these approaches to get the best of both worlds. Pourchot~\etal~\cite{pourchot2018cemrl} proposed \emph{CEM-RL} to combine CEM with either \emph{DDPG}~\cite{lillicrap2015continuous} or \emph{TD3}~\cite{fujimoto2018addressing}. However, \emph{CEM-RL} applies \emph{CEM} within the actor parameter space which is extremely high-dimensional, making the search not efficient. Kalashnikov~\etal~\cite{kalashnikov2018scalable} introduce \emph{QT-Opt}, which leverages \emph{CEM} to search the landscape of the Q function, and enables Q-Learning in continuous action spaces without using an actor. Based on \emph{QT-Opt}, Simmons-Edler~\etal~\cite{simmons2019q} leverage CEM to search the landscape the Q-function but propose to initialize CEM with a Gaussian prior covering the action space, independent of observations. However, as shown in~\cite{yan2019learning}, \emph{CEM} does not scale well to high-dimensional action spaces, such as in the Humanoid task used in this paper. We first let the actor network output an initial Gaussian action distribution conditioned on current state. Then we use CEM to search for an action with a higher Q value than the Q value of the Gaussian mean. Starting from the predicted distribution, \emph{GRAC} speeds up the learning process compared to popular actor-critic methods. 

\vspace{-4pt}
\paragraph{Double-Q Learning}
Using function approximation, Q-learning~\cite{Watkins:89} is known to suffer from overestimation~\cite{Thrun+Schwartz:1993}. To mitigate this problem, Hasselt~\etal~\cite{hasselt2010double} proposed \emph{Double Q-learning} which uses two Q functions with independent sets of weights. 
\emph{TD3}~\cite{fujimoto2018addressing} proposed {\em Clipped Double Q-learning\/} to learn two Q-functions and uses the smaller of the two to form the targets in the TD-Learning process. However, \emph{TD3}~\cite{fujimoto2018addressing} may lead to underestimation. Besides, the actor network in ~\emph{TD3}~\cite{fujimoto2018addressing} is trained to select the action to maximize the first Q function throughout the training process which may make it very different from the second Q-function. A large discrepancy results in large TD-errors which in turn results in large gradients during the update of the actor and critic networks. This makes instability of the learning process more likely. We propose {\em Max-min Double Q-Learning\/} to balance the differences between the two Q functions and provide a better approximation of the Bellman optimality operator~\cite{sutton1998introduction}.

\section{Preliminaries}
\vspace{-4pt}
In this section, we define the notation used in subsequent sections. Consider a {\em Markov Decision Process\/}~(MDP), defined by the tuple $(\mathcal{S},\mathcal{A},\mathscr{P},r,\rho_0,\gamma)$, where $\mathcal{S}$ is a finite set of states, $\mathcal{A}$ is a finite set of actions, $\mathscr{P}:\mathcal{S} \times \mathcal{A} \times \mathcal{S} \rightarrow \mathbb{R}$ is the transition probability distribution, $r:\mathcal{S} \times \mathcal{A} \rightarrow \mathbb{R}$ is the reward function, $\rho_0: \mathcal{S} \rightarrow \mathbb{R}$ is the distribution of the initial state $s_0$, and $\gamma \in (0,1)$ is the discount factor. At each discrete time step $t$, with a given state $s_{t} \in \mathcal{S}$, the agent selects an action $a_t \in \mathcal{A}$, receiving a reward $r$ and the new state $s_{t+1}$ of the environment.

Let $\pi$ denote the policy which maps a state to a probability distribution over the actions, $\pi: \mathcal{S} \rightarrow \mathcal{P(\mathcal{A}})$. The return from a state is defined as the sum of discounted reward $R_t = \sum_{i=t}\gamma^{i-t}r(s_i,a_i)$. In reinforcement learning, the objective is to find the optimal policy $\pi^*$, with parameters $\phi$, which maximizes the expected return $J(\phi)=\sum_{t}\E_{(s_t,a_t) \sim \rho_\pi(s_t,a_t)}[\gamma^t r(s_t,a_t)]$ where $\rho_{\pi}(s_t)$ and $\rho_{\pi}(s_t,a_t)$ denote the state and state-action marginals of the trajectory distribution induced by the policy $\pi(a_t|s_t)$.

We use the following standard definitions of the state-action value function $Q_\pi$. It describes the expected discounted reward after taking an action $a_t$ in state $s_t$ and thereafter following policy $\pi$:
\begin{equation}
    Q_{\pi}(s_t,a_t) = \E_{ \pi}[R_t|s_t, a_t].
\end{equation}


In this work we use \emph{CEM} to find optimal actions with maximum Q values. \emph{CEM} is a randomized zero-order optimization algorithm. To find the action $a$ that maximizes $Q(s,a)$, \emph{CEM} is initialized with a paramaterized distribution over $a$, $P(a; \psi)$. Then it iterates between the following two steps~\cite{Botev2013CEM}:
First generate $a_1, \dots, a_N \sim P(s; \psi)$. Retrieve their Q values $Q(s, a_i)$ and sort the actions to have decreasing Q values. Then keep the first $K$ actions, and solve for an updated parameters $\psi^{\prime}$:
\[
\psi^{\prime} = \text{argmax}_{\psi} \frac{1}{K}\sum_
{i=1}^K \log(P(a_i;\psi))
\]
In the following sections, we denote \emph{CEM}$(Q(s,\cdot), \pi(\cdot | s))$ as the action found by \emph{CEM} to maximize $Q(s,\cdot)$, when \emph{CEM} is intiailized by the distribution predicted by the policy. 

\section{Technical Approach}
\begin{algorithm}
{Initialize critic network $Q_{\theta 1}$, $Q_{\theta 2}$ and actor network $\pi_\phi$ with random parameters $\theta 1$, $\theta 2$ and $\phi$}\\
{Initialize replay buffer $\mathcal{B}$, Set $\alpha < 1$}
\begin{algorithmic}[1]
	\For {$i=1, \dots$}
		\State Select action $a \sim \pi_{\phi_i}(s)$ and observe reward $r$ and new state $s^{\prime}$
		\State Store transition tuple  $(s,a,r,s^{\prime})$ in $\mathcal{B}$
		\State Sample mini-batch of $N$ transitions $(s_t,a_t,r_t,s_{t+1})$ from $\mathcal{B}$
		\State $\hat{a}_{t+1} \sim \pi_{\phi_i}(s_{t+1})$
		\State $\tilde{a}_{t+1} \leftarrow$
		\emph{CEM}($Q(s_{t+1},\cdot;\theta_2),\pi_{\phi_i}(\cdot | s_{t+1})$)
		\State $y \leftarrow r_t + \gamma  \max\{\min_{j=1,2} Q(s_{t+1},\tilde{a}_{t+1};\theta_j), \min_{j=1,2} Q(s_{t+1},\hat{a}_{t+1};\theta_j)\}$
		\State $a^{\dagger} \leftarrow \argmax_{\{\tilde{a},\hat{a}\}}\{\min_{j=1,2} Q(s_{t+1},\tilde{a}_{t+1};\theta_j), \min_{j=1,2} Q(s_{t+1},\hat{a}_{t+1};\theta_j)\}$
		\State $y_1^\prime,y_2^\prime \leftarrow Q(s_{t+1},a^{\dagger};\theta_1), Q(s_{t+1},a^{\dagger};\theta_2)$
		\For {$k=1$ to $K$}
            \State $\mathcal{L}_{k} = \|y - Q(s_t,a_t;\theta_1)\|_2^2 + \|y - Q(s_t,a_t;\theta_2)\|_2^2 + \|y_1^\prime - Q(s_{t+1},a^{\dagger};\theta_1) \|_2^2 + \|y_2^\prime - Q(s_{t+1},a^{\dagger};\theta_2) \|_2^2$ 
            \State $\theta1 \leftarrow \theta1 - \lambda \nabla_{\theta 1}\mathcal{L}_{k}$, $\theta 2 \leftarrow \theta 2 - \lambda \nabla_{\theta 2}\mathcal{L}_{k}$
            \If{$\mathcal{L}_{k} < \alpha \mathcal{L}_{1}$}
                \State Break
            \EndIf
        \EndFor
    \State $\hat{a}_t \sim \pi_{\phi_i}(s_t)$
    \State 
    $J_{\pi}(\phi) = \E_{(s_t,\hat{a}_t)}[Q(s_t, \hat{a}_t;\theta_1)]$ 
     \State 
     $\bar{a}_t \leftarrow$
		CEM($Q(s_t,\cdot;\theta_1), \pi_{\phi_i}(\cdot|s_t)$)
    \State $
\phi \leftarrow \phi - \lambda \nabla_{\phi} J_{\pi}(\phi ) - \lambda \E_{(s_t,\hat{a}_t)} [Q(s_t,\bar{a}_t;\theta_1)-Q(s_t,\hat{a}_t;\theta_1)]_{+} \nabla_{\phi} \log\pi(\bar{a}_t|s_t; \phi)$
	\EndFor
\end{algorithmic} 
\caption{GRAC}\label{alg:alg1}
\end{algorithm}

\vspace{-6pt}
\subsection{Self-Regularized TD Learning}\label{sec:target}
Reinforcement learning is prone to instability and divergence when a nonlinear function approximator such as a neural network is used to represent the Q function~\cite{tsitsiklis1997analysis}. Mnih~\etal\cite{mnih2015human} identified several reasons for this. 
One is the correlation between the current action-values and the target value. Updates to $Q(s_t,a_t)$ often also increase $Q(s_{t+1},a^{*}_{t+1})$ where $a^{*}_{t+1}$ is the optimal next action. Hence, these updates also increase the target value $y_t$ which may lead to oscillations or the divergence of the policy. 


More formally, given transitions $(s_t,a_t,r_t,s_{t+1})$ sampled from the replay buffer distribution $\mathcal{B}$, the Q network can be trained by minimising the loss functions $\mathcal{L}(\theta_i)$ at iteration $i$:
\begin{equation}
    \mathcal{L}(\theta_i) = \E_{(s_t,a_t) \sim \mathcal{B}}\|(Q(s_t,a_t; \theta_i) - y_i)\|^2
\end{equation}
where for now let us assume $y_i= r_t + \gamma\max_{a}Q(s_{t+1},a;\theta_i)$ to be the target for iteration $i$ computed based on the current Q network parameters $\theta_i$.  ${a}^{*}_{t+1}=\argmax_{a}Q(s_{t+1},a)$. If we update the parameter $\theta_{i+1}$ to reduce the loss $\mathcal{L}(\theta_i)$, it changes both $Q(s_t,a_t;\theta_{i+1})$ and $Q(s_{t+1},a_{t+1}^{*};\theta_{i+1})$. Assuming an increase in both values, then the new target value $y_{i+1} = r_t + \gamma Q(s_{t+1},a^*_{t+1}; \theta_{i+1})$ for the next iteration will also increase leading to an explosion of the Q function. We demonstrated this behavior in an ablation experiment with results in Fig.~\ref{fig:abl1}. We also show how maintaining a separate target network~\cite{mnih2015human} with frozen parameters $\theta^-$ to compute $y_{i+1} = r_t + \gamma Q(s_{t+1},a^*_{t+1}; \theta^-)$ delays the update of the target and therefore leads to more stable learning of the Q function. However, delaying the function updates also comes with the price of slowing down the learning process. 

We propose a self-Regularized TD-learning approach to minimize the TD-error while also keeping the change of $Q(s_{t+1},a^*_{t+1})$ small. This regularization mitigates the divergence issue~\cite{tsitsiklis1997analysis}, and no longer requires a target network that would otherwise slow down the learning process. Let $y^{\prime}_i = Q(s_{t+1},a_{t+1}^{*}; \theta_i)$, and $y_i=r_t + \gamma y^{\prime}_i$. We define the learning objective as
\begin{equation}
    \min_{\theta} \|Q(s_t,a_t;\theta) - y_i\|^2 + \|Q(s_{t+1},a^{*}_{t+1};\theta) - y^\prime_i\|^2
\end{equation}
where the first term is the original TD-Learning objective and the second term is the regularization term penalizing large updates to $Q(s_{t+1}, a^*_{t+1})$. Note that when the current Q network updates its parameters $\theta$, both $Q(s_t,a_t)$ and $Q(s_
{t+1},a^{*}_{t+1})$ change. Hence, the target value $y_i$ will also change which is different from the approach of keeping a frozen target network for a few iterations.  We will demonstrate in our experiments that this self-regularized TD-Learning approach removes the delays in the update of the target value thereby achieves faster and stable learning. 

\subsection{Self-Guided Policy Improvement with Evolution Strategies}\label{sec:cem_a}

The policy, known as the actor, can be updated through a combination of two parts. The first part, which we call Q-loss policy update, improves the policy through local gradients of the current Q function, while the second part, which we call \emph{CEM} policy update, finds a high-value action via \emph{CEM} in a broader neighborhood of the Q function landscape, and update the action distribution to concentrate towards this high-value action. We describe the two parts formally below.

Given states $s_t$ sampled from the replay buffer, the Q-loss policy update maximizes the objective
\begin{equation}
    J_{\pi}(\phi) = \E_{s_t \sim \mathcal{B},\hat{a}_t \sim \pi}[Q(s_t, \hat{a}_t)],
\end{equation}
where $\hat{a}_t$ is sampled from the current policy $\pi(\cdot | s_t)$.
The gradient is taken through the reparameterization trick. We reparameterize the policy using a neural network transformation as described in Haarnoja~\etal~\cite{haarnoja2018soft}, 
\begin{equation}
    \hat{a}_t = f_{\phi}(\epsilon_t | s_t)
\end{equation}
where $\epsilon_t$ is an input noise vector, sampled from a fixed distribution, such as a standard multivariate Normal distribution.
Then the gradient of $J_\pi(\phi)$ is:
\begin{equation}
    \nabla J_{\pi}(\phi) = \E_{s_t \sim \mathcal{B}, \epsilon_t \sim \mathcal{N}}[\frac{\partial Q(s_t,f_{\phi}(\epsilon_t | s_t))} {\partial f} \frac{\partial f_\phi(\epsilon_t | s_t)}{\partial \phi}]
\end{equation}

For the CEM policy update, given a minibatch of states $s_t$, we first find a high-value action $\bar{a}_t$ for each state by running \emph{CEM} on the current Q function, $\bar{a}_t = \text{\emph{CEM}}(Q(s_t,\cdot), \pi(\cdot | s_t))$. Then the policy is updated to increase the probability of this high-value action.
The guided update on the parameter $\phi$ of $\pi$ at iteration $i$ is
\begin{equation}
  \E_{s_t \sim \mathcal{B}, \hat{a}_t \sim \pi} [Q(s_t,\bar{a}_t) - Q(s_t,\hat{a}_t)]_{+} \nabla_{\phi} \log\pi_i(\bar{a}_t|s_t).
\end{equation}

We used $Q(s_t, \hat{a}_t)$ as a baseline term, since its expectation over actions $\hat{a}_t$ will give us the normal baseline $V(s_t)$:
\begin{equation}
    \E_{s_t\sim \mathcal{B}} [Q(s_t,\bar{a}_t)-V(s_t)] \nabla_{\phi} \log\pi_i(\bar{a}_t|s_t)
\end{equation}

In our implementation, we only perform an update if the improvement on the $Q$ function, $Q(s_t,\bar{a}_t)-Q(s_t,\hat{a}_t)$, is non-negative, to guard against the occasional cases where \emph{CEM} fails to find a better action.

Combining both parts of updates, the final update rule on the parameter $\phi_i$ of policy $\pi_i$ is 
\[
\phi_{i+1} = \phi_{i} - \lambda \nabla_{\phi} J_{\pi_i}(\phi_{i} ) - \lambda \E_{s_t \sim \mathcal{B}, \hat{a}_t \sim \pi_i} [Q(s_t,\bar{a}_t) - Q(s_t,\hat{a}_t)]_{+} \nabla_{\phi} \log\pi_i(\bar{a}_t|s_t)
\]
where $\lambda$ is the step size.

We can prove that if the Q function has converged to $Q^{\pi}$, the state-action value function induced by the current policy, then both the Q-loss policy update and the \emph{CEM} policy update will be guaranteed to improve the current policy.
We formalize this result in Theorem~\ref{theo:Qloss} and Theorem~\ref{theo:CEMImpro}, and prove them in Appendix~\ref{appendsubsec:theorem1} and ~\ref{appendsubsec:theorem2}.
\begin{theorem}\label{theo:Qloss}
\textbf{$Q$-loss Policy Improvement} Starting from the current policy $\pi$, we maximize the objective $J_\pi = \E_{(s,a) \sim \rho_\pi(s,a)} Q^{\pi}(s,a)$. The maximization converges to a critical point denoted as $\pi_{new}$. Then the induced Q function, $Q^{\pi_{new}}$, satisfies $\forall (s,a), Q^{\pi_{new}}(s,a) \geq Q^{\pi}(s,a).$
\end{theorem}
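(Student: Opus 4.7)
The plan is to adapt the classical policy improvement argument (Sutton and Barto, Ch.~4) to the integrated objective used here. Since $\pi_{new}$ is reached by maximizing $J_\pi(\cdot)$ starting from $\pi$, the first observation is $J_\pi(\pi_{new}) \geq J_\pi(\pi)$. Unpacking, and using $\E_{a \sim \pi(\cdot\mid s)}[Q^\pi(s,a)] = V^\pi(s)$, this rewrites as
\[
\E_{s \sim \rho_\pi}\E_{a \sim \pi_{new}(\cdot\mid s)}[Q^\pi(s,a)] \;\geq\; \E_{s \sim \rho_\pi}[V^\pi(s)].
\]

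The second step, which I expect to be the main obstacle, is to upgrade this \emph{integrated} inequality to the \emph{pointwise} bound
\[
\E_{a \sim \pi_{new}(\cdot\mid s)}[Q^\pi(s,a)] \;\geq\; V^\pi(s) \quad \text{for every state } s.
\]
The gradient stationarity $\nabla_\phi J_\pi(\phi)=0$ at a local maximum, combined with a sufficiently expressive policy parameterization (so that each conditional $\pi_\phi(\cdot\mid s)$ can be tuned essentially independently across states in $\mathrm{supp}\,\rho_\pi$), forces $\pi_{new}(\cdot\mid s)$ to be a local maximizer of the per-state inner objective $q \mapsto \E_{a \sim q}[Q^\pi(s,a)]$. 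Hence no first-order improvement is available at any individual $s$, and the pointwise inequality follows. Justifying this rigorously is the delicate point: one either assumes a tabular/universal representation, or leverages the per-state decoupling provided by the reparameterized stochastic-gradient form $\hat{a}_t = f_\phi(\epsilon_t\mid s_t)$.

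Given the pointwise inequality, I would close the proof by the standard bootstrapping argument. Writing
\[
Q^\pi(s,a) = r(s,a) + \gamma \E_{s' \sim \mathscr{P}(\cdot\mid s,a)}[V^\pi(s')]
\]
and substituting $V^\pi(s') \leq \E_{a' \sim \pi_{new}}[Q^\pi(s',a')]$, then iterating the same bound inside the expectation $n$ times, one obtains
\[
Q^\pi(s,a) \;\leq\; \E_{\pi_{new}}\!\left[\sum_{k=0}^{n-1} \gamma^k r(s_k,a_k) \;+\; \gamma^n V^\pi(s_n) \,\Big|\, s_0=s,\, a_0=a\right].
\]
Passing $n \to \infty$ with $\gamma \in (0,1)$ and rewards bounded makes the residual $\gamma^n V^\pi(s_n)$ vanish, while the partial sum converges to $Q^{\pi_{new}}(s,a)$ by definition. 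This yields $Q^\pi(s,a) \leq Q^{\pi_{new}}(s,a)$ for all $(s,a)$, as claimed.
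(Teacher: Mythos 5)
Your proposal follows essentially the same route as the paper's proof: the paper invokes a ``state separation assumption'' (the policy network is expressive enough to tune each conditional $\pi(\cdot\mid s)$ independently) to obtain exactly the pointwise inequality $\E_{a \sim \pi_{new}(\cdot\mid s)}Q^{\pi}(s,a) \geq V^{\pi}(s)$ that you identify as the delicate step, and then closes with the same repeated unrolling of the Bellman equation. Your treatment is, if anything, slightly more careful in flagging where the per-state decoupling needs justification.
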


\begin{theorem}\label{theo:CEMImpro}
\textbf{\emph{CEM} Policy Improvement} Assuming the \emph{CEM} process is able to find the optimal action of the state-action value function, $a^*(s) = \argmax_{a}Q^{\pi}(s,a)$, where $Q^{\pi}$ is the Q function induced by the current policy $\pi$. 
By iteratively applying the update $ \E_{(s,a) \sim \rho_\pi(s,a)} [Q(s,a^*)-Q(s,a)]_{+}\nabla \log\pi(a^*|s)$, the policy converges to $\pi_{new}$. Then $Q^{\pi_{new}}$ satisfies $\forall (s,a), Q^{\pi_{new}}(s,a) \geq Q^{\pi}(s,a).$
\end{theorem}

\subsection{Max-min Double Q-Learning}\label{sec:cem_c}
Q-learning~\cite{Watkins:89} is known to suffer from overestimation~\cite{Thrun+Schwartz:1993}. Hasselt~\etal~\cite{hasselt2010double} proposed Double-Q learning which uses two Q functions with independent sets of weights to mitigate the overestimation problem. Fujimoto~\etal~\cite{fujimoto2018addressing} proposed Clipped Double Q-learning with two Q function denoted as $Q(s,a;\theta_1)$ and $Q(s,a;\theta_2)$, or $Q_1$ and $Q_2$ in short. Given a transition $(s_t,a_t,r_t,s_{t+1})$, Clipped Double Q-learning uses the minimum between the two estimates of the Q functions when calculating the target value in TD-error~\cite{sutton1998introduction}:
\begin{equation}
    y= r_t + \gamma\min_{j=1,2} Q(s_{t+1},\hat{a}_{t+1};\theta_j)
    \label{eqn:clipped}
\end{equation}
where $\hat{a}_{t+1}$ is the predicted next action.

Fujimoto~\etal~\cite{fujimoto2018addressing} mentioned that such an update rule may induce an underestimation bias. In addition, $\hat{a}_{t+1}=\pi_{\phi}(s_{t+1})$ is the prediction of the actor network. The actor network's parameter ${\phi}$ is optimized according to the gradients of $Q_1$. In other words, $\hat{a}_{t+1}$ tends be selected according to the $Q_1$ network which consistently increases the discrepancy between the two Q-functions. In practice, we observe that the discrepancy between the two estimates of the Q-function, $|Q_1 - Q_2|$, can increase dramatically leading to an unstable learning process. An example is shown in Fig.~\ref{fig:abl2} where $Q(s_{t+1},\hat{a}_{t+1};\theta_1)$ is always bigger than $Q(s_{t+1},\hat{a}_{t+1};\theta_2)$. 

We introduce \emph{Max-min Double Q-Learning} to reduce the discrepancy between the Q-functions. We first select $\hat{a}_{t+1}$ according to the actor network $\pi_{\phi}(s_{t+1})$. Then we run \emph{CEM} to search the landscape of $Q_2$ within a broad neighborhood of $\hat{a}_{t+1}$ to return a second action $\tilde{a}_{t+1}$. Note that \emph{CEM} selects an action $\tilde{a}_{t+1}$ that maximises $Q_2$ while the actor network selects an action $\hat{a}_{t+1}$ that maximises $Q_1$. We gather four different Q-values: $Q(s_{t+1},\hat{a}_{t+1};\theta_1)$, $Q(s_{t+1},\hat{a}_{t+1};\theta_2)$, $Q(s_{t+1},\tilde{a}_{t+1};\theta_1)$, and $Q(s_{t+1},\tilde{a}_{t+1};\theta_2)$. We then run a max-min operation to compute the target value  that cancels the biases induced by $\hat{a}_{t+1}$ and $\tilde{a}_{t+1}$.
\begin{equation}
\label{eq:maxmin}
\begin{aligned}
y & = r_t + \gamma  \max\{\min_{j=1,2} Q(s_{t+1},\hat{a}_{t+1};\theta_j), \min_{j=1,2} Q(s_{t+1},\tilde{a}_{t+1};\theta_j)\}
\end{aligned}
\end{equation} 

The inner min-operation $\min_{j=1,2} Q(s_{t+1},\hat{a}_{t+1};\theta_j)$ is adopted from Eq.~\ref{eqn:clipped} and mitigates overestimation~\cite{Thrun+Schwartz:1993}. The outer max operation helps to reduce the difference between $Q_1$ and $Q_2$. In addition, the max operation provides a better approximation of the Bellman optimality operator~\cite{sutton1998introduction}. We visualize $Q_1$ and $Q_2$ during the learning process in Fig.~\ref{fig:abl2}.
We have a theorem formalizing the convergence of the proposed Max-min Double Q-Learning approach in the finite MDP setting. We state the theorem and its prove in Appendix~\ref{appendsubsec:theorem3}.


\section{Experiments}
\subsection{Comparative Evaluation}
We present \emph{GRAC}, a self-guided and self-regularized actor-critic algorithm as summarized in Algorithm~\ref{alg:alg1}. To evaluate {\em GRAC}, we measure its performance on the suite of MuJoCo continuous control tasks~\cite{todorov2012mujoco}, interfaced through OpenAI Gym~\cite{brockman2016openai}. We compare our method with \emph{DDPG}~\cite{lillicrap2015continuous}, \emph{TD3}~\cite{fujimoto2018addressing}, \emph{TRPO}~\cite{schulman2015trust}, and \emph{SAC}~\cite{haarnoja2018soft}. 
We use the source code released by the original authors and adopt the same hyperparameters reported in the original papers and the number of training steps according to \emph{SAC}~\cite{haarnoja2018soft}. Hyperparameters for all experiments are in Appendix~\ref{appendsubsec:hyperparams}. 
Results are shown in Figure \ref{fig:compare_results}. {\em GRAC} outperforms or is comparable to all other algorithms in both final performance and learning speed across all tasks. In addition, we apply the self-regularized TD-learning method to DQN~\cite{mnih2015human} on the Atari Breakout environment and it outperforms \emph{DQN} by 25\%. The learning curve over 50 million steps is shown in Appendix~\ref{appendsec:atari}.

\begin{figure*}[ht!]
\centering
\includegraphics[width=.85\linewidth]{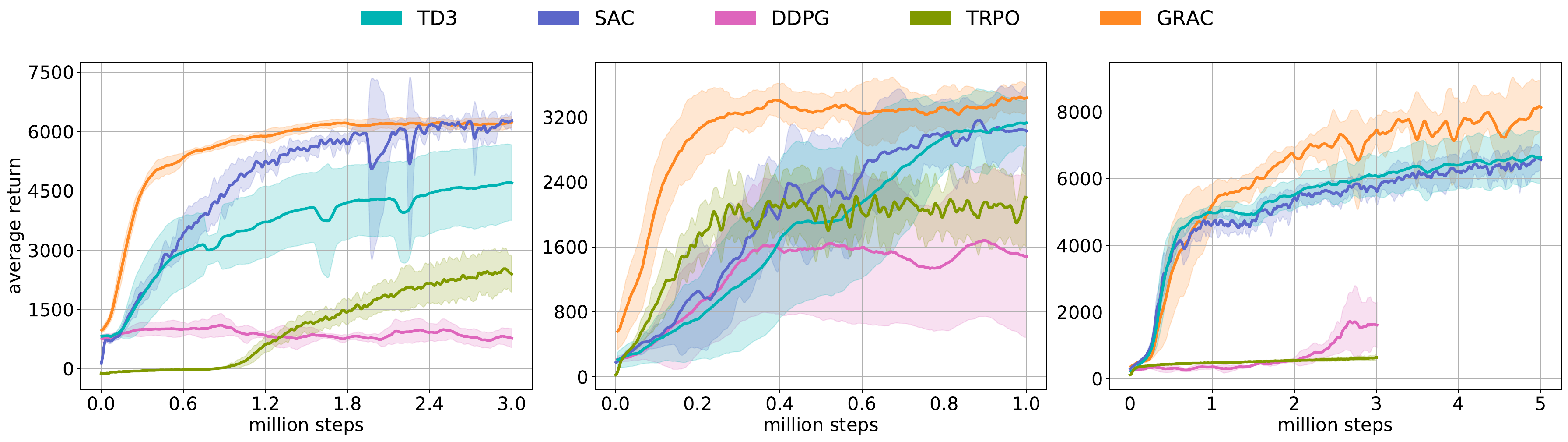}
\begin{tabular}{ccc}
    \begin{minipage}{.2\textwidth}
    \centering
    \par\small{(a)~Ant-v2}
    \end{minipage}
&
    \begin{minipage}{.3\linewidth}
    \centering
    \par\small{(b)~Hopper-v2}
   \end{minipage}
& 
    \begin{minipage}{.2\linewidth}
    \centering
    \par\small{(c)~Humanoid-v2}
    \end{minipage}
\end{tabular}
\includegraphics[width=.85\linewidth]{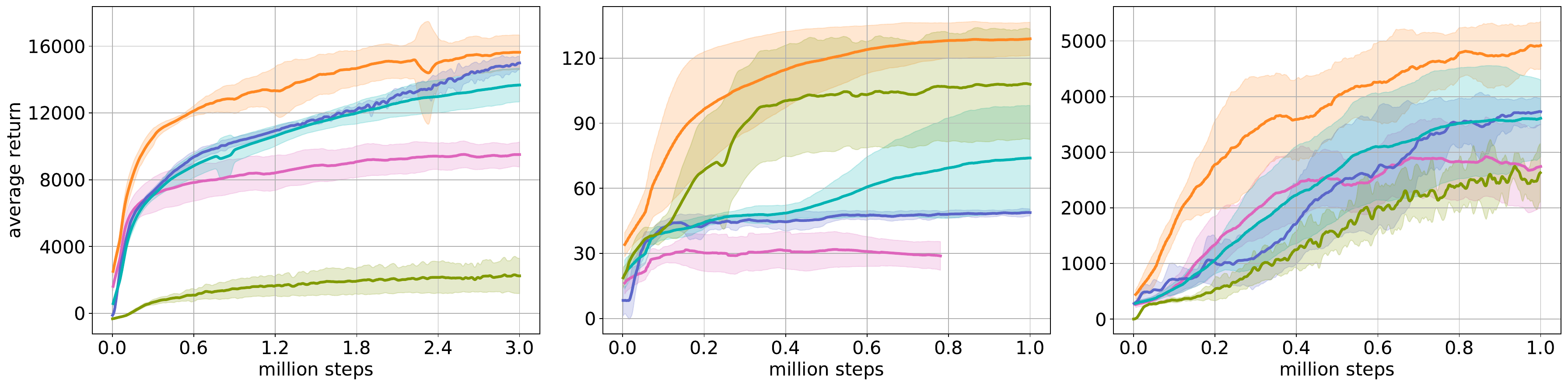}
\begin{tabular}{ccc}
    \begin{minipage}{.2\linewidth}
    \centering
    \par\small{(d)~HalfCheetah-v2}
    \end{minipage}
&
 	\begin{minipage}{.3\linewidth}
    \centering
    \par\small{(e)~Swimmer-v2}
   \end{minipage}
& 
    \begin{minipage}{.2\linewidth}
    \centering
    \par\small{(f)~Walker2d-v2}
    \end{minipage}\\ 
\end{tabular}
\caption{Learning curves for the OpenAI gym continuous control tasks. For each task, we train 8 instances of each algorithm, using 8 different seeds. Evaluations are performed every 5000 interactions with the environment. Each evaluation reports the return (total reward), averaged over 10 episodes. For each training seed, we use a different seed for evaluation, which results in different start states. The solid curves and shaded regions represent the mean and standard deviation, respectively, of the average return over 8 seeds. All curves are smoothed with window size 10 for visual clarity. \emph{GRAC} (orange) learns faster than other methods across all tasks. \emph{GRAC} achieves comparable result to the state-of-the-art methods on the Ant-v2 task and outperforms prior methods on the other five tasks including the complex high-dimensional Humanoid-v2.}\label{fig:compare_results}
\end{figure*}


\subsection{Ablation Study}
 In this section, we present ablation studies to understand the contribution of each proposed component: Self-Regularized TD-Learning~(Section~\ref{sec:target}), Self-Guided Policy Improvment~(Section~\ref{sec:cem_a}), and Max-min Double Q-Learning~(Section~\ref{sec:cem_c}). We present our results in Fig.~\ref{fig:table2} in which we compare the performance of {\em GRAC} with alternatives, each removing one component from GRAC. Additional learning curves can be found in Appendix~\ref{appendsubsec:ablation_evolutionary} and \ref{appendsubsec:ablation_self_reg}. We also run experiments to examine how sensitive GRAC is to some hyperparameters such as $\alpha$ and $K$ listed in Alg.~\ref{alg:alg1}, and the results can be found in Appendix~\ref{appendsubsec:termination}.

\paragraph{Self-Regularized TD Learning} To verify the effectiveness of the proposed self-regularized TD-learning method, we apply our method to \emph{DDPG} (DDPG w/o target network w/ target regularization). We compare against two baselines: the original \emph{DDPG} and \emph{DDPG} without target networks for both actor and critic (\emph{DDPG w/o target network}). We choose DDPG, because it does not have additional components such as Double Q-Learning, which may complicate the analysis of this comparison.

In Fig.~\ref{fig:abl1}, we visualize the average returns, and average $Q_1$ values over training batchs (${y^\prime}_1$ in Alg.\ref{alg:alg1}). The $Q_1$ values of \emph{DDPG w/o target network} changes dramatically which leads to poor average returns. \emph{DDPG} maintains stable Q values but makes slow progress. Our proposed \emph{DDPG} w/o target network w/ target regularization maintains stable Q values. In addition, we compare the average returns of \emph{DDPG w/o target network, w/ target regularization} and \emph{DDPG} within one million steps over four random seeds. \emph{DDPG w/o target network, w/ target regularization} outperforms \emph{DDPG} by large margins in five out of six Mujoco tasks (Fig.~\ref{fig:abl1} shows results on Hopper-v2. The remaining results can be found in Appendix~\ref{appendsubsec:ablation_self_reg}). This demonstrates the effectiveness of our method and its potentials to be applied to a wide range of DRL methods. 

\begin{figure}[bh!]
\centering
\begin{tabular}{cc}
    \begin{minipage}{.45\textwidth}
    \centering
    \includegraphics[width=\linewidth]{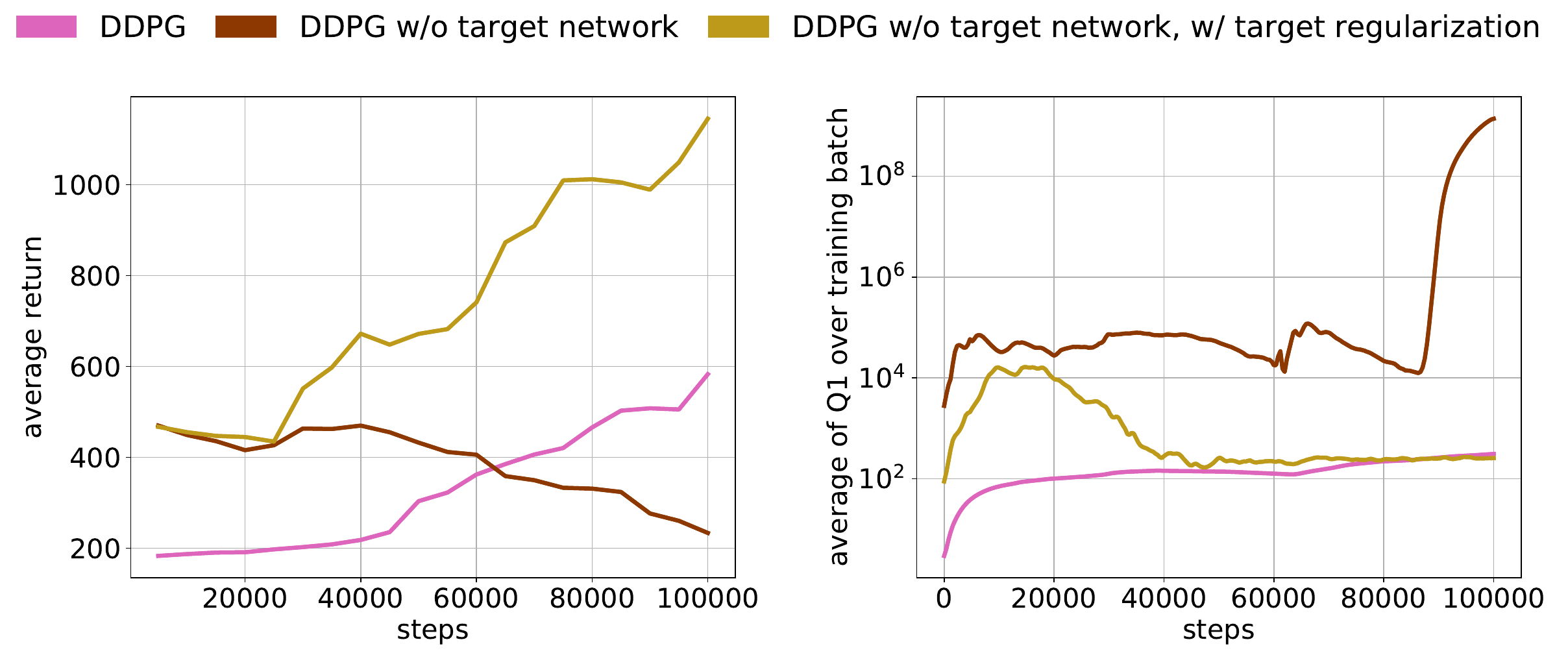}
    \end{minipage}
    &
 \begin{minipage}{.36\textwidth}
    \centering
    \includegraphics[width=\linewidth]{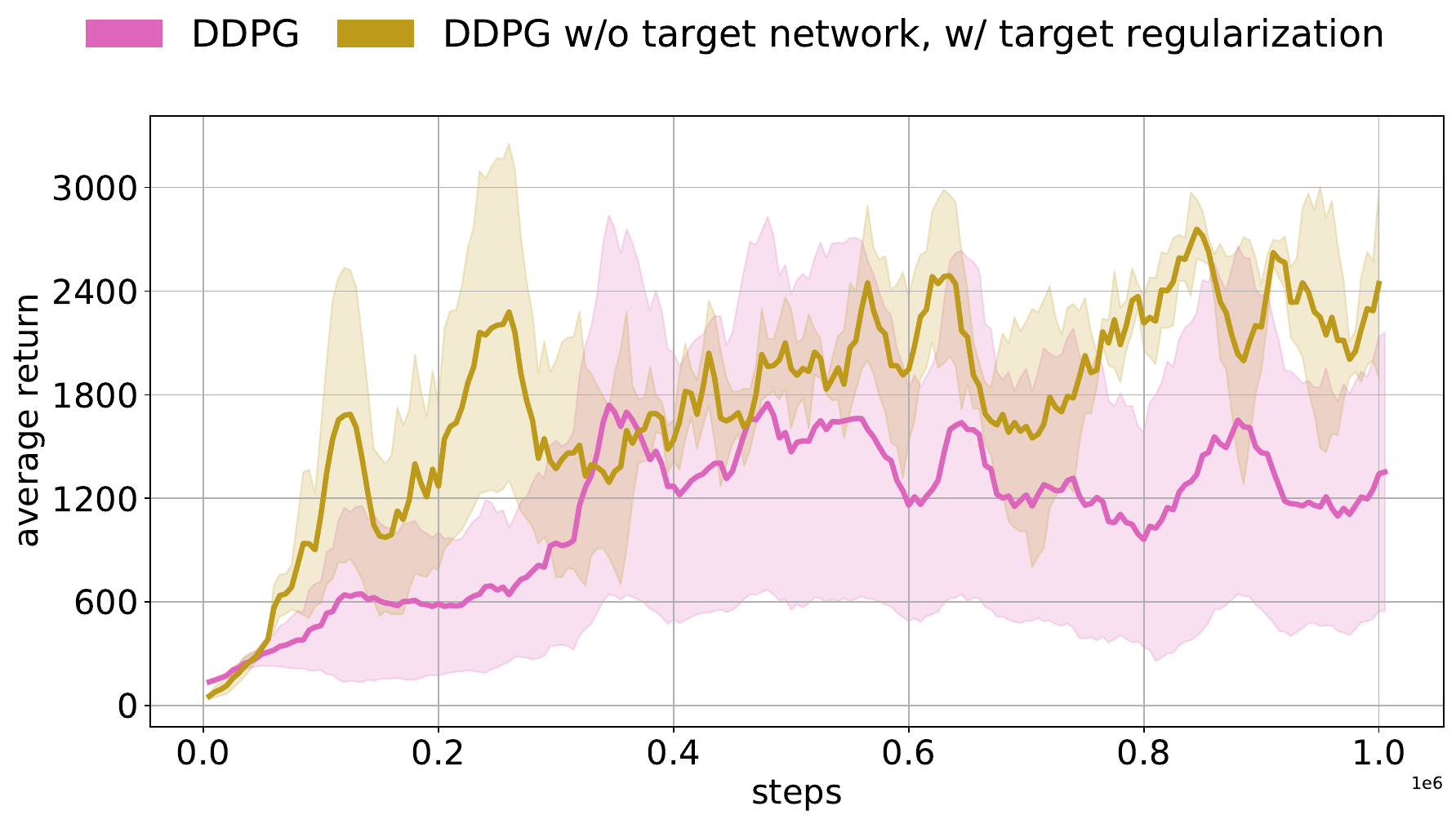}
    \end{minipage}
    
\end{tabular}

\begin{tabular}{ccc}
    \begin{minipage}{.2\textwidth}
    \centering
    \par\small{(a)~Returns on Hopper-v2}
    \end{minipage}
&
    \begin{minipage}{.257\linewidth}
    \centering
    \par\small{(b)~Average of $Q_1$ over training batch on Hopper-v2}
   \end{minipage}
  &
    \begin{minipage}{.3\linewidth}
    \centering
    \par\small{(c)~Returns on Hopper-v2 over one million steps}
   \end{minipage}
 \end{tabular}
 \caption{Learning curves and average $Q_1$ values ($y^{\prime}_1$ in Alg.~\ref{alg:alg1}) on Hopper-v2. \emph{DDPG} w/o target network quickly diverges as seen by the unrealistically high Q values. \emph{DDPG} is stable but progresses slowly. If we remove the target network and add the proposed target regularization, we both maintain stability and achieve faster learning than \emph{DDPG}.}\label{fig:abl1}
 \vspace{-4pt}
\end{figure}

\vspace{-4pt}   
\paragraph{Policy Improvement with Evolution Strategies}
The GRAC actor network uses a combination of two actor loss functions, denoted as \emph{QLoss} and \emph{CEMLoss}. \emph{QLoss} refers to the unbiased gradient estimators which extends the \emph{DDPG}-style policy gradients~\cite{lillicrap2015continuous} to stochastic policies. \emph{CEMLoss} represents the policy improvement guided by the action found with the zero-order optmization method CEM. We run another two ablation experiments on all six control tasks and compare it with our original policy training method denoted as \emph{GRAC}. As seen in Fig.\ref{fig:table2}, in general \emph{GRAC} achieves a better performance compared to either using \emph{CEMLoss} or \emph{QLoss}. The significance of the improvements varies within the six control tasks. For example, \emph{CEMLoss} plays a dominant role in Swimmer while \emph{QLoss} has a major effect in HalfCheetah. It suggests that \emph{CEMLoss} and \emph{QLoss} are complementary.

\begin{figure}[tb!]
    \centering
    \includegraphics[width=0.75\textwidth]{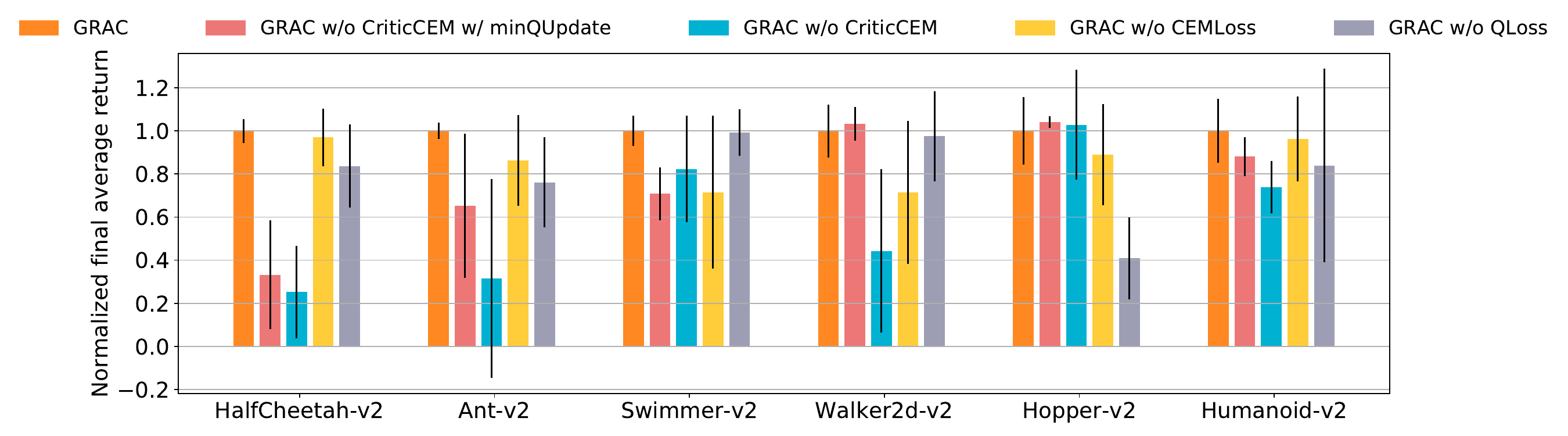}
    \caption{Final average returns, normalized w.r.t {\em GRAC} for all tasks. For each task, we train each ablation setting with 4 seeds, and average the last 10 evaluations of each seed (40 evaluations in total). The black lines represent one standard deviation. Actor updates without CEMLoss (\emph{GRAC w/o CEMLoss}) and actor updates w.r.t minimum of both Q networks (\emph{GRAC w/o CriticCEM w/ minQUpdate}) achieves slightly better performance on Walker2d-v2 and Hopper-v2. GRAC achieves the best performance on 4 out of 6 tasks, especially on more challenging tasks with higher-dimensional state and action spaces (Humanoid-v2, Ant-v2, HalfCheetah-v2). This suggests that individual components of GRAC complement each other.  }
    \label{fig:table2}
\end{figure}
\vspace{-3mm}

\paragraph{Max-min Double Q-Learning}
We additionally verify the effectiveness of the proposed Max-min Double Q-Learning method. We run an ablation experiment by replacing Max-min by Clipped Double Q-learing~\cite{fujimoto2018addressing} denoted as \emph{GRAC w/o CriticCEM}. In Fig.~\ref{fig:abl2}, we visualize the learning curves of the average return, $Q_1$ ($y^{\prime}_1$ in Alg.~\ref{alg:alg1}), and $Q_1-Q_2$ ($y^{\prime}_1-y^{\prime}_2$ in Alg.~\ref{alg:alg1}). \emph{GRAC} achieves high performance while maintaining a smoothly increasing Q function. Note that the difference between Q functions, $Q_1-Q_2$, remains around zero for \emph{GRAC}. \emph{GRAC w/o CriticCEM} shows high variance and drastic changes in the learned $Q_1$ value. In addition, $Q_1$ and $Q_2$ do not always agree. Such unstable Q values result in a performance crash during the learning process. Instead of Max-min Double Q Learning, another way to address the gap between $Q_1$ and $Q_2$ is to perform actor updates on the minimum of $Q_1$ and $Q_2$ networks (as seen in SAC). Replacing Max-min Double Q Learning with this trick achieves lower performance than \emph{GRAC} in more challenging tasks such as HalfCheetah-v2, Ant-v2, and Humanoid-v2 (See \emph{GRAC w/o CriticCEM w/ minQUpdate} in Fig.\ref{fig:table2}).

\begin{figure}[bh!]
\centering
\includegraphics[width=0.75\linewidth]{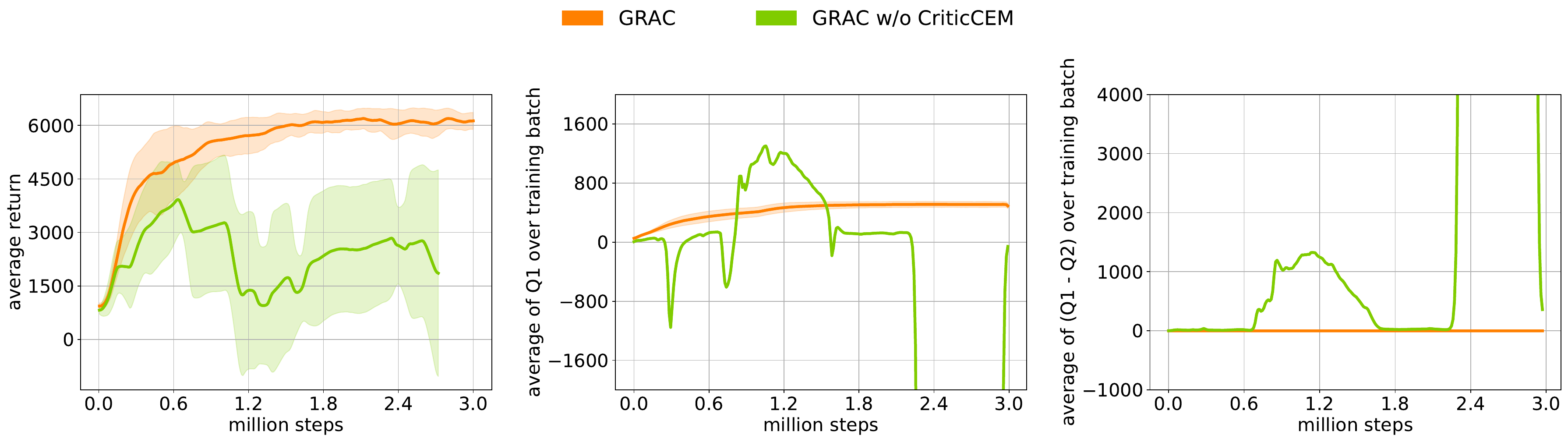}
\begin{tabular}{ccc}
    \begin{minipage}{.3\textwidth}
    \begin{flushright}
   \par\small{(a)~Returns on Ant-v2}
   \end{flushright}
    \end{minipage}
&
    \begin{minipage}{.3\linewidth}
    \centering
    \par \small{(b)~Average of $Q_1$ over training batch on Ant-v2}
   \end{minipage} 
&
    \begin{minipage}{.3\linewidth}
    \begin{flushleft}
    \par\small{(c)~Average of $Q_1-Q_2$ over training batch on Ant-v2}
    \end{flushleft}
   \end{minipage}
 \end{tabular}
 \caption{ Learning curves (left), average $Q_1$ values (middle), and average of the difference between $Q_1$ and $Q_2$ (right) on Ant-v2. Average Q values are computes as minibatch average of $y^{\prime}_1$ and $y^{\prime}_2$, defined in Alg.~\ref{alg:alg1}. \emph{GRAC w/o CriticCEM} represents replacing Max-min Double Q-Learning with Clipped Double Q-Learning. Without Max-min double Q-Learning to balance the magnitude of $Q_1$ and $Q_2$, $Q_1$ blows up significantly compared to $Q_2$, leading to divergence. }\label{fig:abl2}
\end{figure}
\vspace{-3mm}

\section{Conclusion}
\vspace{-6pt}
Leveraging neural networks as function approximators, DRL has been successfully demonstrated on a range of decision-making and control tasks. However, the nonlinear function approximators also introduce issues such as divergence and overestimation. As our main contribution, we proposed a self-regularized TD-learning method to address divergence without requiring a target network that may slow down learning progress. The proposed method is agnostic to the specific Q-learning method and can be added to any of them. In addition, we propose self-guided policy improvement by combining policy-gradient with zero-order optimization such as the Cross Entropy Method. This helps to search for actions associated with higher Q-values in a broad neighborhood and is robust to local noise in the Q function approximation.  
Taken together, these components define \emph{GRAC}, a novel self-guided and self-regularized actor critic algorithm. We evaluate our method on the suite of OpenAI gym tasks, achieving or outperforming state of the art in every environment tested.


\bibliography{example_paper}
\bibliographystyle{plain}

\clearpage
\section{Appendix - GRAC: Self-Guided and Self-Regularized Actor-Critic}
\subsection{Implementation Details}
\subsubsection{Neural Network Architecture Details}
We use a two layer feedforward neural network of 256 and 256
hidden nodes respectively. Rectified linear units~(ReLU) are put after each layer for both the actor and critic except the last layer. For the last layer of the actor network, a tanh function is used as the activation function to squash the action range within $[-1,1]$. \emph{GRAC} then multiplies the output of the tanh function by \emph{max action} to transform [-1,1] into [-\emph{max action}, \emph{max action}]. The actor network outputs the mean and sigma of a Gaussian distribution.

\subsubsection{CEM Implementation}
Our CEM implementation is based on the CEM algorithm described in Pourchot~\etal~\cite{pourchot2018cemrl}.

\begin{algorithm}[H]
{Input: Q-function Q(s,a);  size of population $N_{pop}$; size of elite $N_{elite}$ where $N_{elite} \leq N_{pop}$; max iteration of CEM $N_{cem}$.}\\
{Initialize the mean $\mu$ and covariance matrix $\Sigma$ from actor network predictions.}
\begin{algorithmic}[1]
	\For {$i=1...,N_{\text{cem}}$}
		\State Draw the current population set $\{a_{pop} \}$ of size $N_{pop}$ from $\mathcal{N}(\mu, \Sigma)$.
		\State Receive the $Q$ values $\{q_{pop}\} = \{Q(s,a) | a \in \{a_{pop}\} \}$.
	    \State Sort $\{q_{pop}\}$ in descending order.
 	    \State Select top $N_{elite}$ Q values and choose their corresponding $a_{pop}$ as elite $\{a_{elite}\}$. 
	    \State Calculate the mean $\mu$ and covariance matrix $\Sigma$ of the set  $\{ a_{elite} \}$.
 	\EndFor
\end{algorithmic} 
{Output: The top one elite in the final iteration.}
\caption{CEM}
\label{alg:cem}
\end{algorithm}

\subsubsection{Additional Detail on Algorithm 1: GRAC}
The actor network outputs the mean and sigma of a Gaussian distribution. In Line 2 of Alg.1, the actor has to select action $a$ based on the state $s$. In the test stage, the actor directly uses the predicted mean as output action $a$. In the training stage, the actor first samples an action $\hat{a}$ from the predicted Gaussian distribution $\pi_{\phi}(s)$, then \emph{GRAC} runs \emph{CEM} to find a second action $\tilde{a}=\emph{CEM}(Q(s,\cdot;\theta_2),\pi_{\phi}(\cdot | s)$). \emph{GRAC} uses 
 $a=\argmax_{\{\tilde{a},\hat{a}\}}\{\min_{j=1,2} Q(s,\tilde{a};\theta_j), \min_{j=1,2} Q(s,\hat{a};\theta_j)\}$ as the final action.
 
\subsection{Appendix on Experiments}
\subsubsection{Hyperparameters used}\label{appendsubsec:hyperparams}
Table~\ref{tab:hyper} and Table~\ref{tab:hyper_env} list the hyperparameters used in the experiments. $[a,b]$ denotes a linear schedule from $a$ to $b$ during the training process. 
\begin{table}[H]
\centering
\begin{tabular}{@{}p{0.3\linewidth}p{0.15\linewidth}}
\hline \hline 
Parameters & Values  \\ \hline
\hline
discount $\gamma$ & 0.99\\
\hline
replay buffer size & 1e6\\
\hline
batch size & 256 \\
\hline
optimizer & Adam~\cite{kingma2014adam}\\
\hline 
learning rate in critic & 3e-4\\
\hline 
learning rate in actor & 2e-4\\
\hline
$N_{\text{cem}}$ & 2\\
\hline
$N_{\text{pop}}$ & 256\\
\hline
$N_{\text{elite}}$ & 5\\
\hline\hline
\end{tabular}
\caption{Hyperparameter Table}
\label{tab:hyper}
\end{table}

\begin{table}[H]
\centering
\begin{tabular}{@{}p{0.17\linewidth}p{0.12\linewidth} p{0.11\linewidth} p{0.11\linewidth} p{0.15\linewidth} p{0.14\linewidth}}

\hline \hline 
Environments & ActionDim & $K$ in Alg.1 & $\alpha$ in Alg.1 & CemLossWeight & Reward Scale \\ \hline
Ant-v2 &  8 & 20 & [0.7,~85] & 1.0/ActionDim & 1.0 \\
Hopper-v2 & 3 & 20 & [0.85,~0.95] & 0.3/ActionDim & 1.0\\
HalfCheetah-v2 & 6 & 50 & [0.7,~0.85] & 1.0/ActionDim & 0.5\\
Humanoid-v2 & 17 & 20 & [0.7,~0.85] & 1.0/ActionDim & 1.0\\
Swimmer-v2 & 2 & 20 & [0.5,~0.75] & 1.0/ActionDim & 1.0\\
Walker2d-v2 & 6 & 20 & [0.8,~0.9] & 0.3//ActionDim & 1.0\\
\hline\hline
\end{tabular}
\caption{Environment Specific Parameters}
\label{tab:hyper_env}
\end{table}

\subsubsection{Additional Learning Curves for Policy Improvement with Evolution Strategy}
\label{appendsubsec:ablation_evolutionary}

\begin{figure}[H]
\centering

\begin{tabular}{ccc}
    \begin{minipage}{.3\textwidth}
    \includegraphics[width=.85\linewidth]{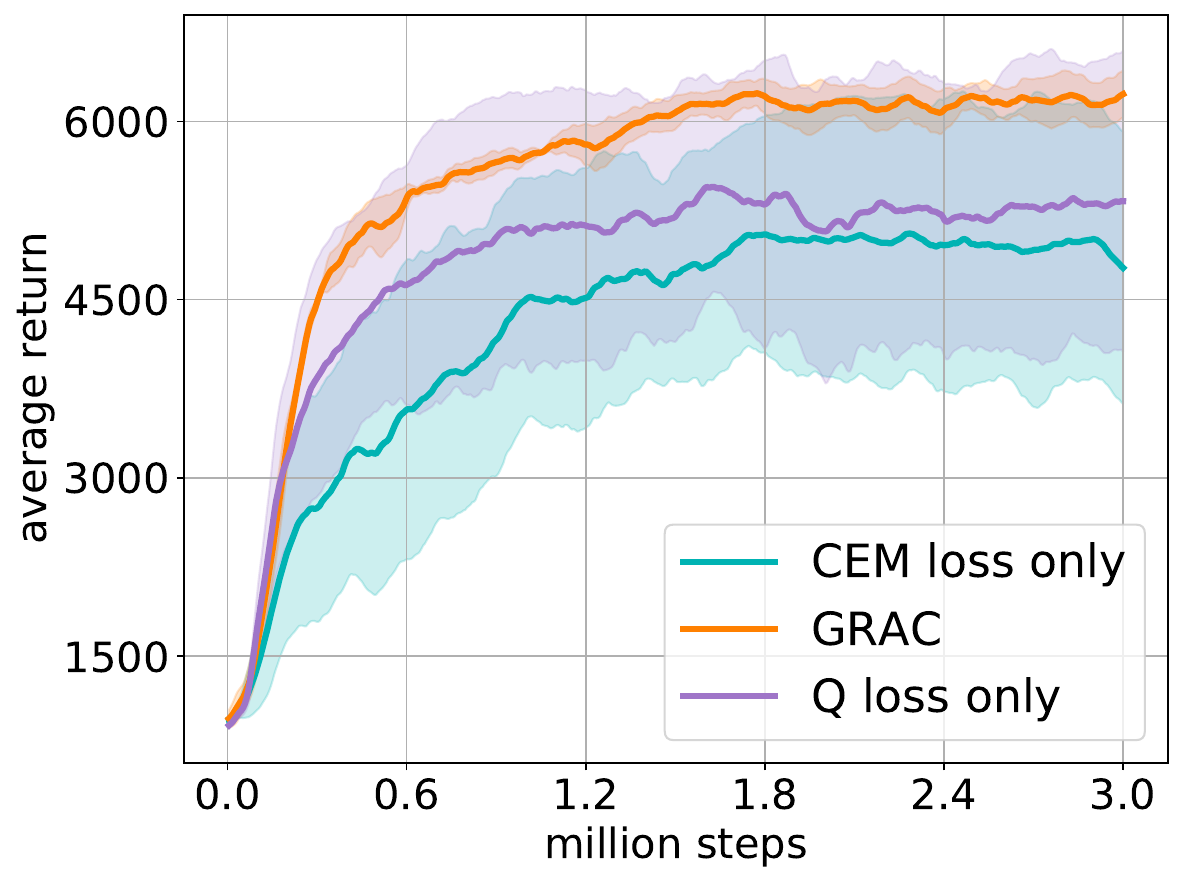}
    \centering
    \par\small{(a)~Returns on Ant-v2}
    \end{minipage}
&
    \begin{minipage}{.3\linewidth}
    \centering
        \includegraphics[width=.85\linewidth]{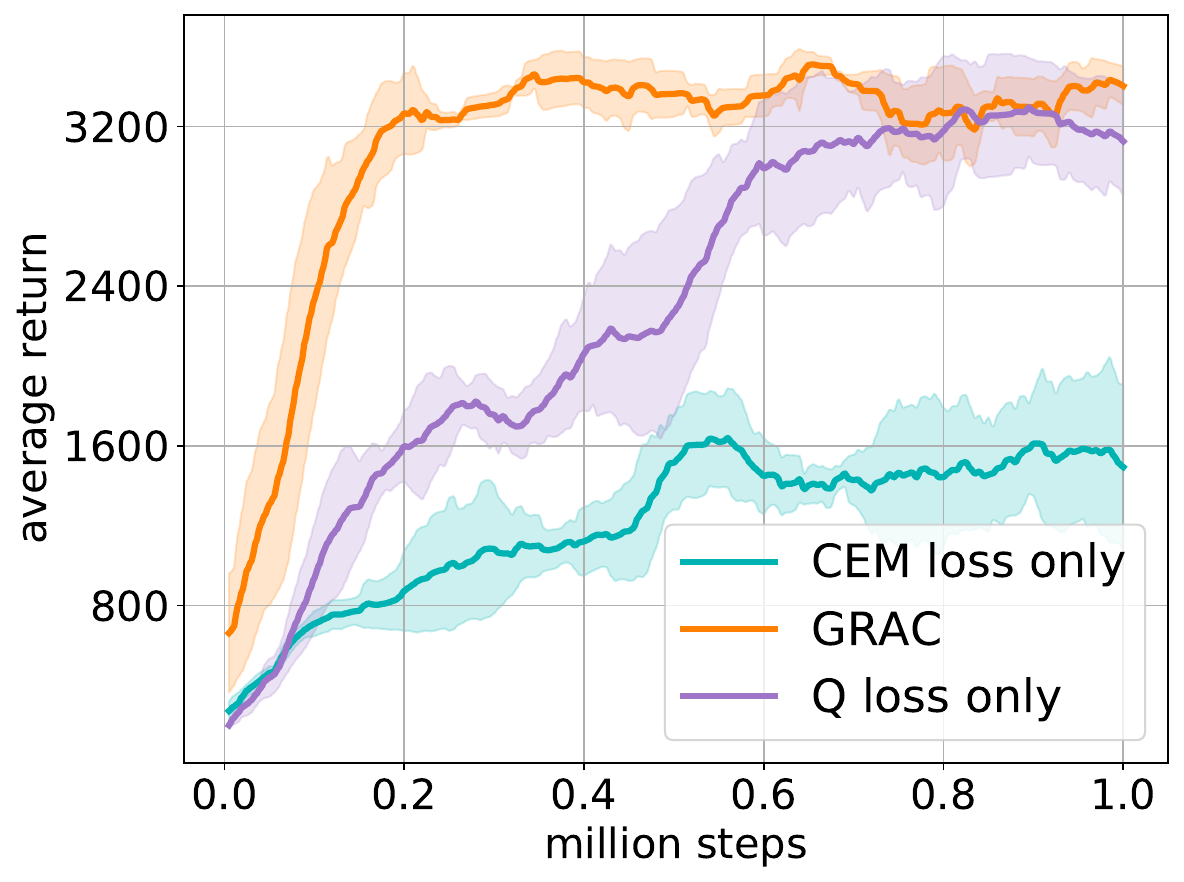}
    \par\small{(b)~Returns on Hopper-v2}
   \end{minipage}
& 
    \begin{minipage}{.3\linewidth}
    \centering
        \includegraphics[width=.85\linewidth]{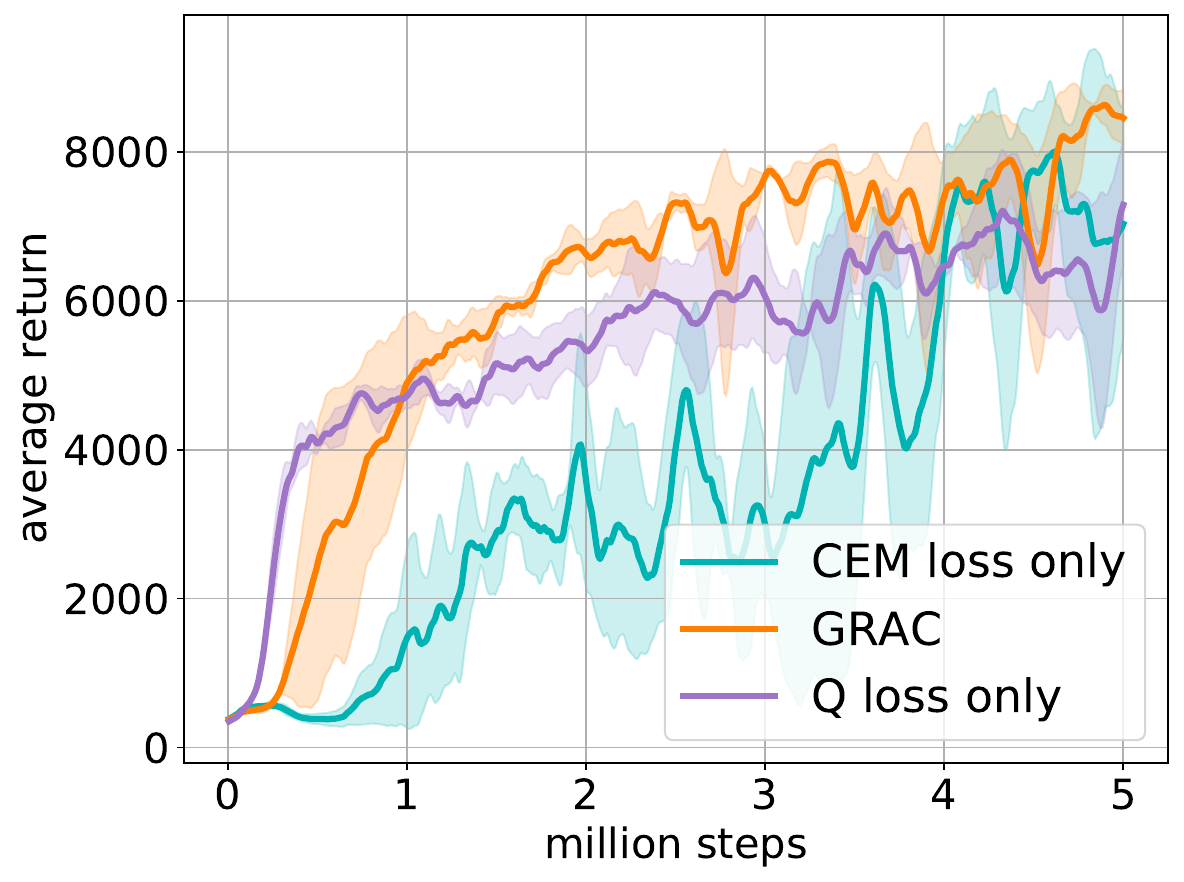}
    \par\small{(c)~Returns on Humanoid-v2}
    \end{minipage}
\end{tabular}

\begin{tabular}{ccc}
    \begin{minipage}{.3\linewidth}
    \centering
        \includegraphics[width=.85\linewidth]{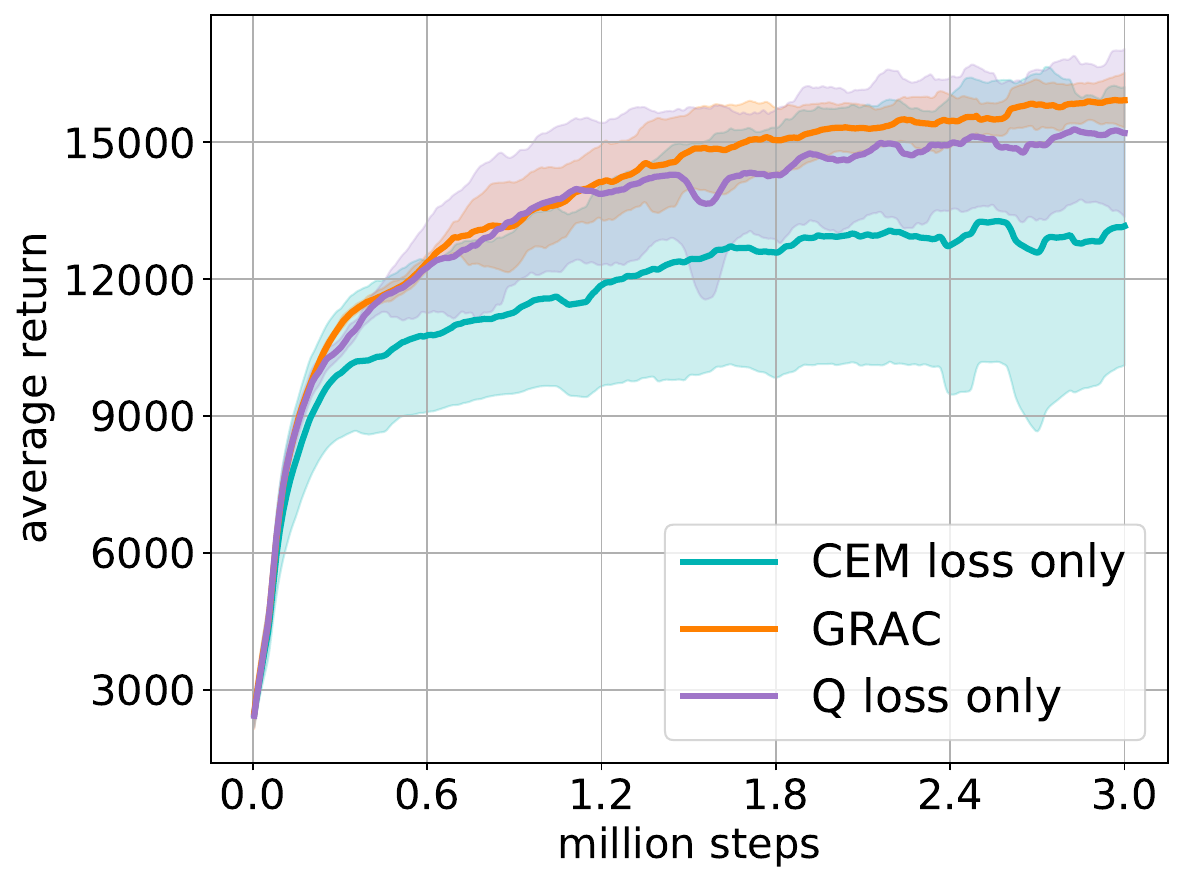}
    \par\small{(d)~Returns on HalfCheetah-v2}
    \end{minipage}
&
 	\begin{minipage}{.3\linewidth}
    \centering
        \includegraphics[width=.85\linewidth]{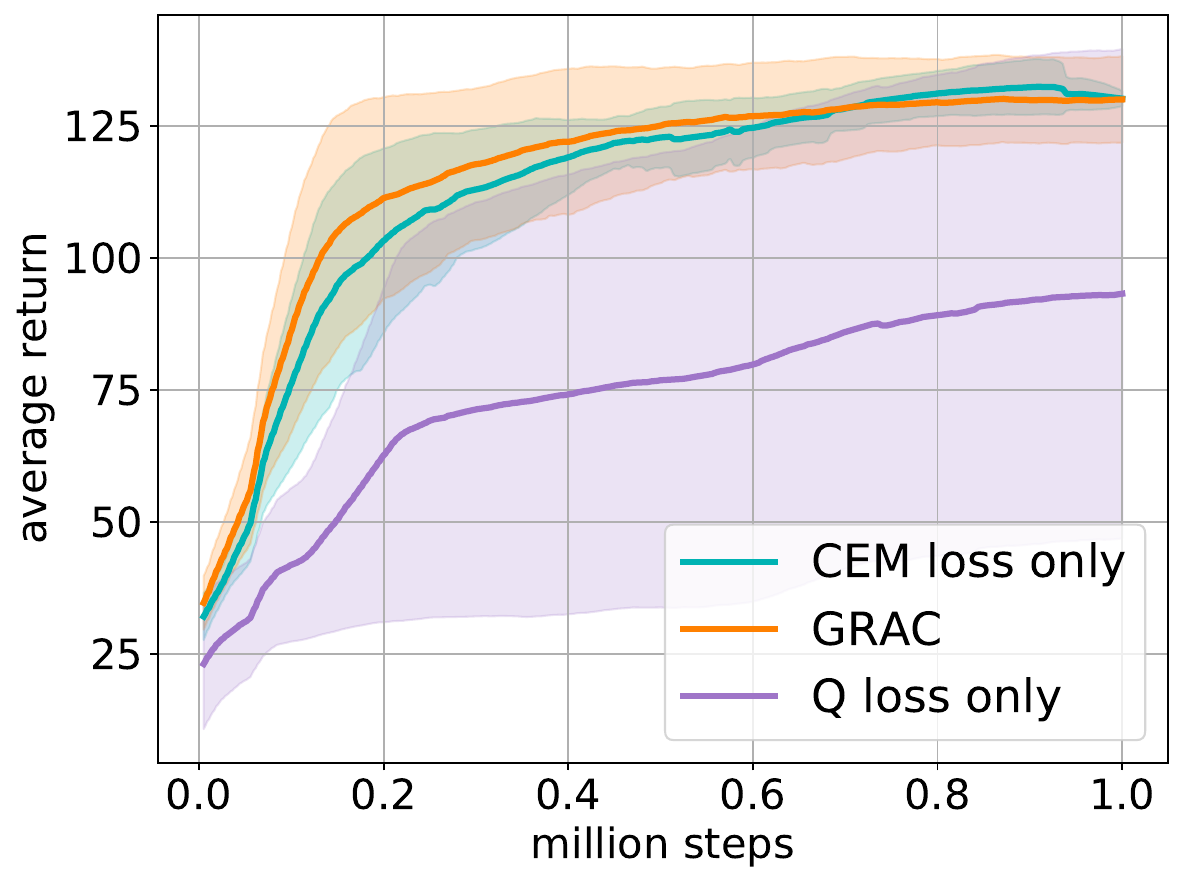}
    \par\small{(e)~Returns on Swimmer-v2}
   \end{minipage}
& 
    \begin{minipage}{.3\linewidth}
    \centering
        \includegraphics[width=.85\linewidth]{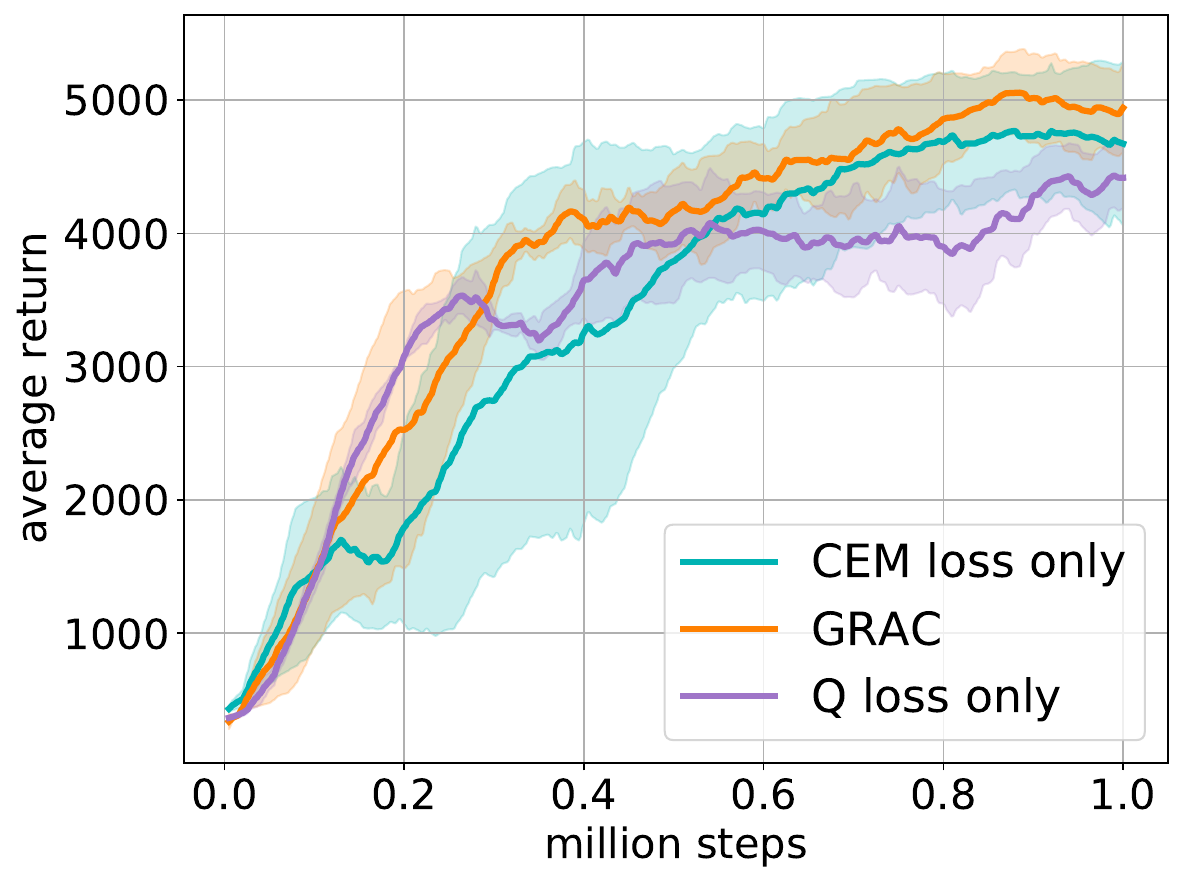}
    \par\small{(f)~Returns on Walker2d-v2}
    \end{minipage}\\ 
\end{tabular}
\caption{Learning curves for the OpenAI gym continuous control tasks. The \emph{GRAC} actor network uses a combination of two actor loss functions, denoted as QLoss and CEMLoss. \emph{QLoss Only} represents the actor network only trained with QLoss. \emph{CEM Loss Only} represents the actor network only trained with CEMLoss. In general \emph{GRAC} achieves a better performance compared to either using \emph{CEMLoss} or \emph{QLoss}.} 
\end{figure}

\subsubsection{Additional Learning Curves for Ablation Study of Self-Regularized TD Learning}
\label{appendsubsec:ablation_self_reg}
We report results in Figs~\ref{appendix:fig:AvgQ1} and ~\ref{appendix:fig:learning_crv}.

\begin{figure}
\centering
\begin{tabular}{cc}
    \begin{minipage}{.55\textwidth}
    \centering
    \includegraphics[width=\linewidth]{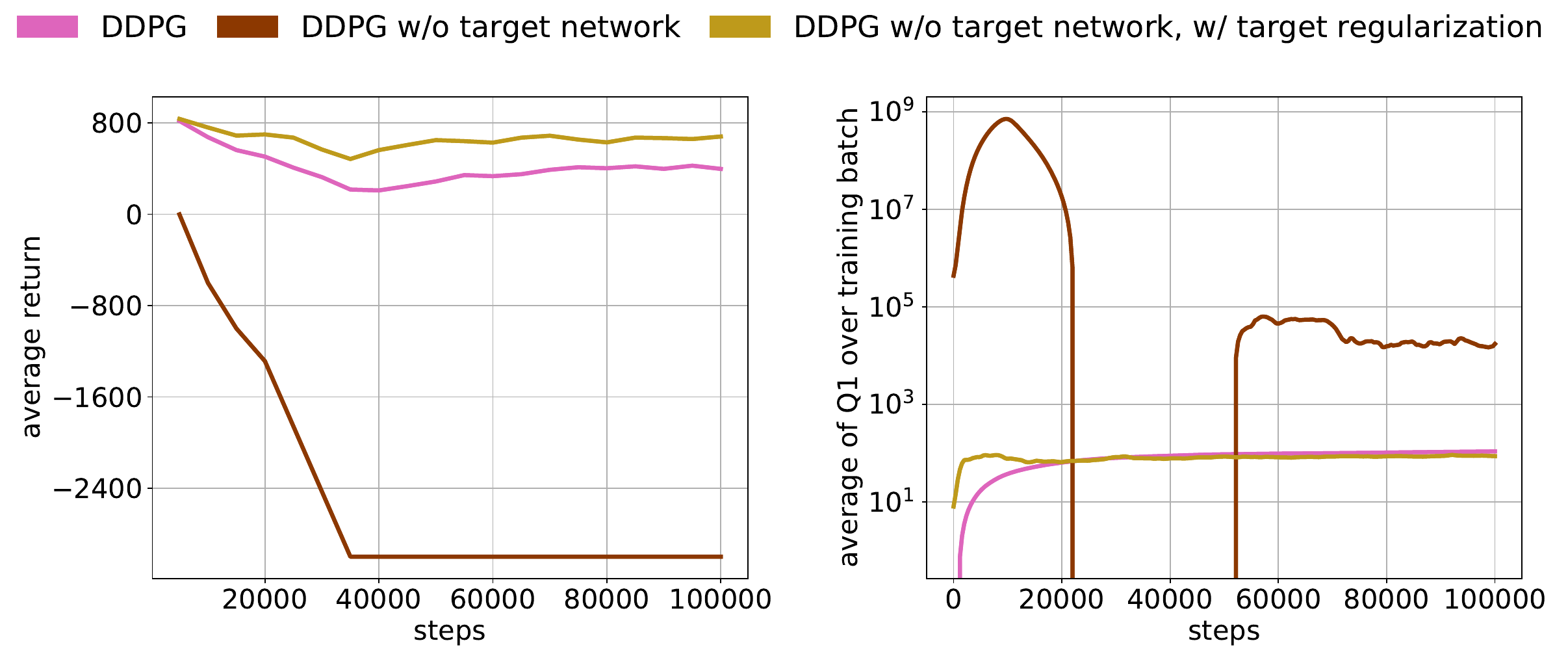}
    \end{minipage}
\end{tabular}
\begin{tabular}{cc}
    \begin{minipage}{.3\textwidth}
    \centering
    \par\small{(a)~Returns on Ant-v2}
    \end{minipage}
&
    \begin{minipage}{.3\linewidth}
    \centering
    \par\small{(b)~Average of $Q_1$ over training batch on Ant-v2}
   \end{minipage}
 \end{tabular}
\centering
\begin{tabular}{cc}
    \begin{minipage}{.55\textwidth}
    \centering
    \includegraphics[width=\linewidth]{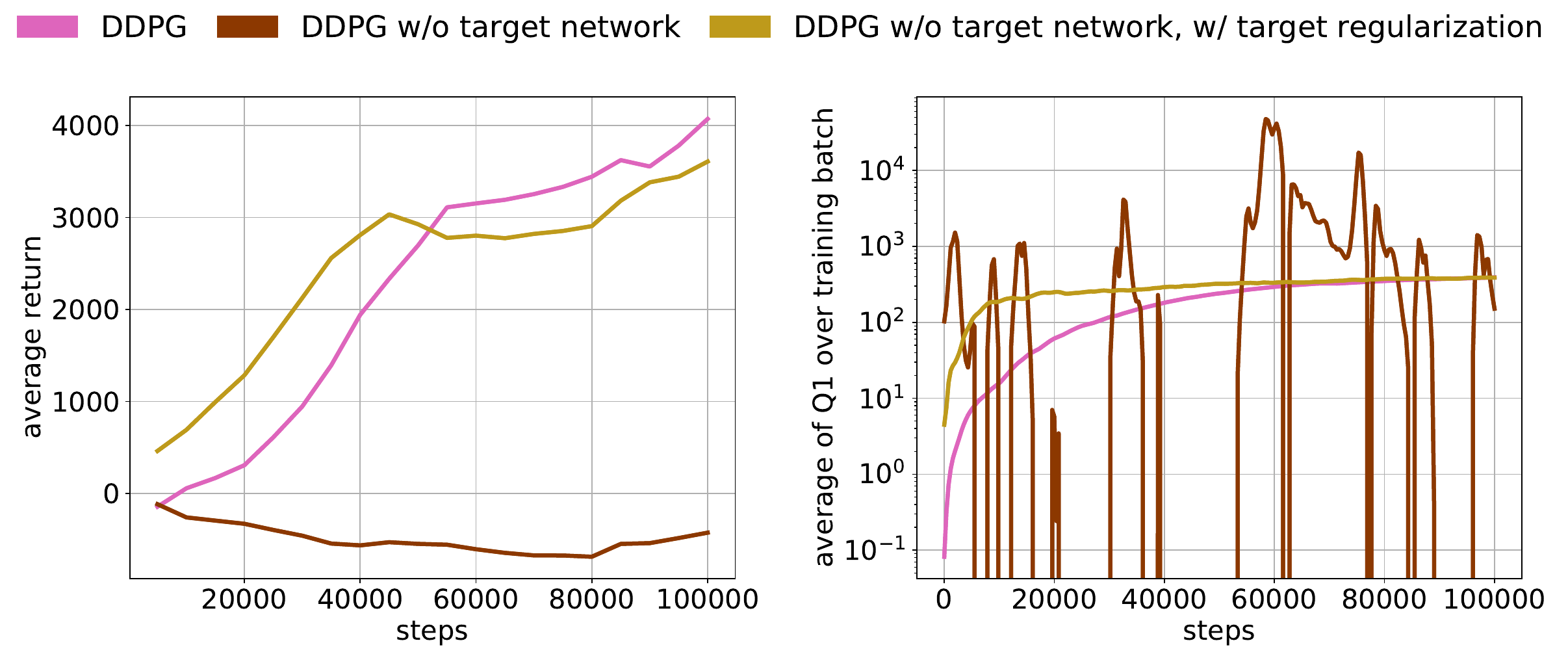}
    \end{minipage}
\end{tabular}
\begin{tabular}{cc}
    \begin{minipage}{.3\textwidth}
    \centering
    \par\small{(a)~Returns on HalfCheetah-v2}
    \end{minipage}
&
    \begin{minipage}{.3\linewidth}
    \centering
    \par\small{(b)~Average of $Q_1$ over training batch on HalfCheetah-v2}
   \end{minipage}
 \end{tabular}
\centering
\begin{tabular}{cc}
    \begin{minipage}{.55\textwidth}
    \centering
    \includegraphics[width=\linewidth]{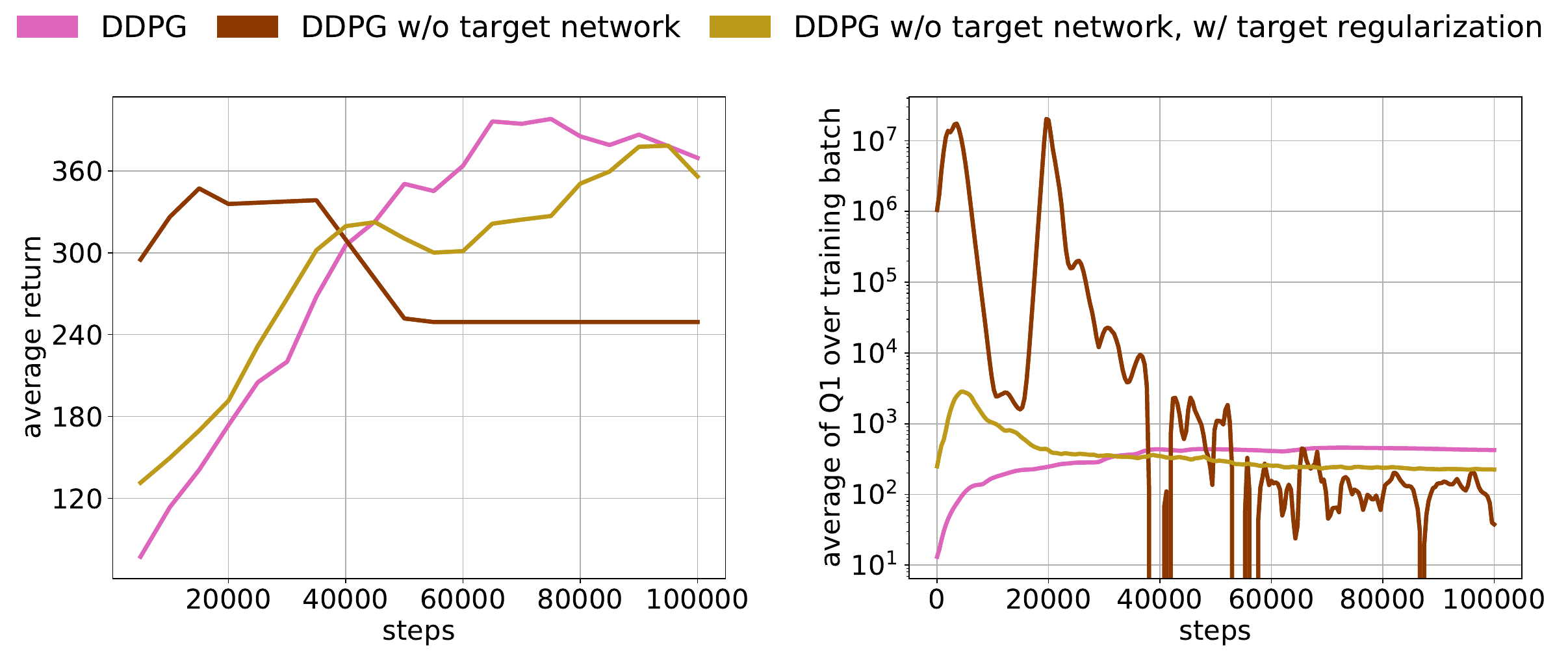}
    \end{minipage}
\end{tabular}
\begin{tabular}{cc}
    \begin{minipage}{.3\textwidth}
    \centering
    \par\small{(a)~Returns on Humanoid-v2}
    \end{minipage}
&
    \begin{minipage}{.3\linewidth}
    \centering
    \par\small{(b)~Average of $Q_1$ over training batch on Humanoid-v2}
   \end{minipage}
 \end{tabular}
\centering
\begin{tabular}{cc}
    \begin{minipage}{.55\textwidth}
    \centering
    \includegraphics[width=\linewidth]{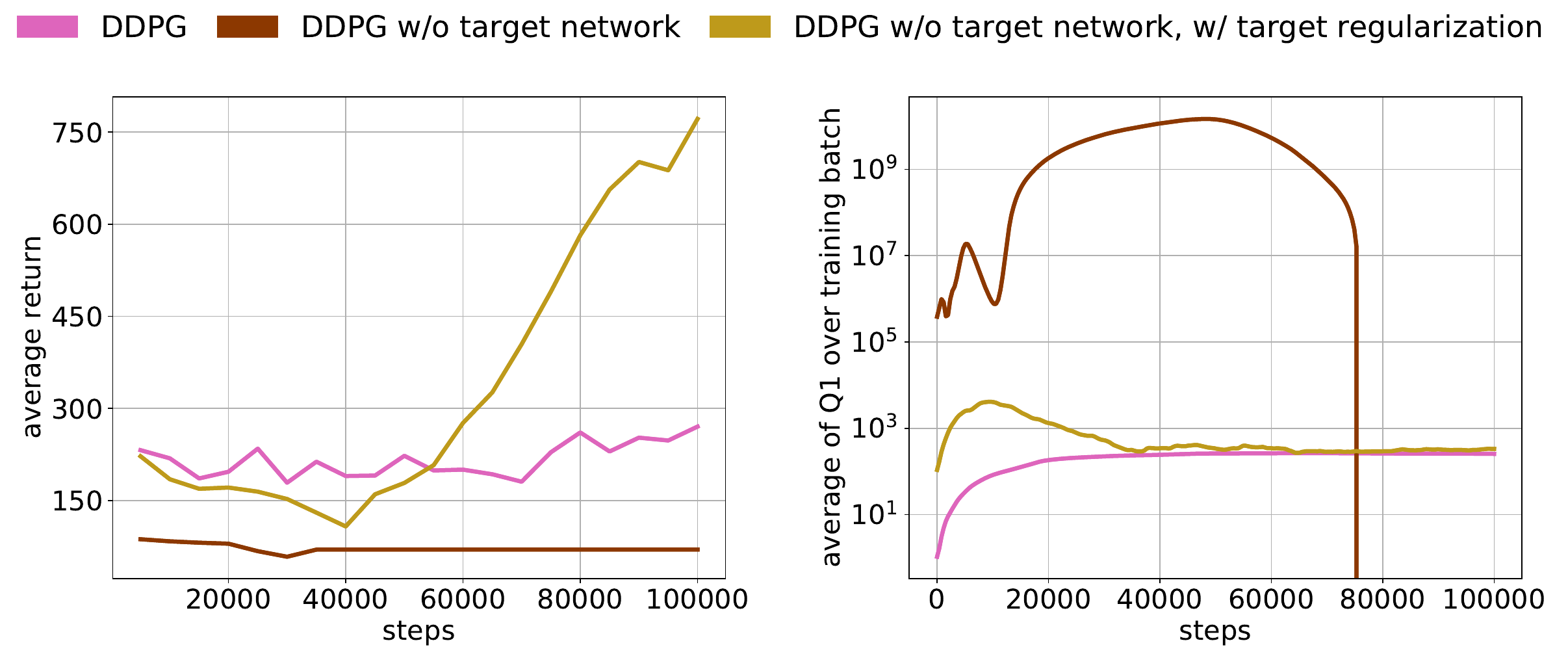}
    \end{minipage}
\end{tabular}
\begin{tabular}{cc}
    \begin{minipage}{.3\textwidth}
    \centering
    \par\small{(a)~Returns on Walker2d-v2}
    \end{minipage}
&
    \begin{minipage}{.3\linewidth}
    \centering
    \par\small{(b)~Average of $Q_1$ over training batch on Walker2d-v2}
   \end{minipage}
 \end{tabular}
\centering
\begin{tabular}{cc}
    \begin{minipage}{.55\textwidth}
    \centering
    \includegraphics[width=\linewidth]{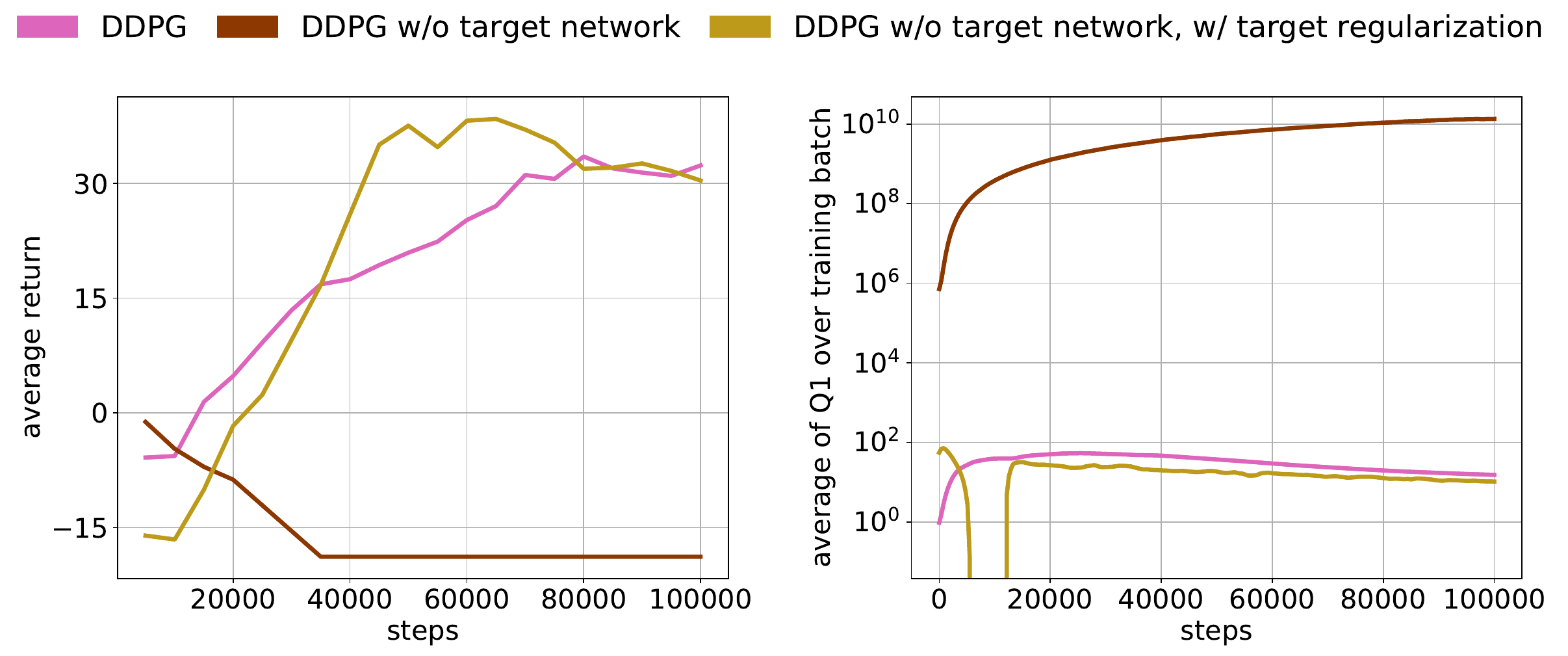}
    \end{minipage}
\end{tabular}
\begin{tabular}{cc}
    \begin{minipage}{.3\textwidth}
    \centering
    \par\small{(a)~Returns on Swimmer-v2}
    \end{minipage}
&
    \begin{minipage}{.3\linewidth}
    \centering
    \par\small{(b)~Average of $Q_1$ over training batch on Swimmer-v2}
   \end{minipage}
 \end{tabular}
 \caption{\label{appendix:fig:AvgQ1}Learning curves and average $Q_1$ values ($y^{\prime}_1$ in Alg. 1 of the main paper). \emph{DDPG} w/o target network quickly diverges as seen by the unrealistically high Q values. \emph{DDPG} is stable but often progresses slower. If we remove the target network and add the proposed target regularization, we both maintain stability and achieve a faster or comparable learning rate.}
\end{figure}


\begin{figure}[H]
\centering

\begin{tabular}{ccc}
    \begin{minipage}{.3\textwidth}
    \includegraphics[width=.85\linewidth]{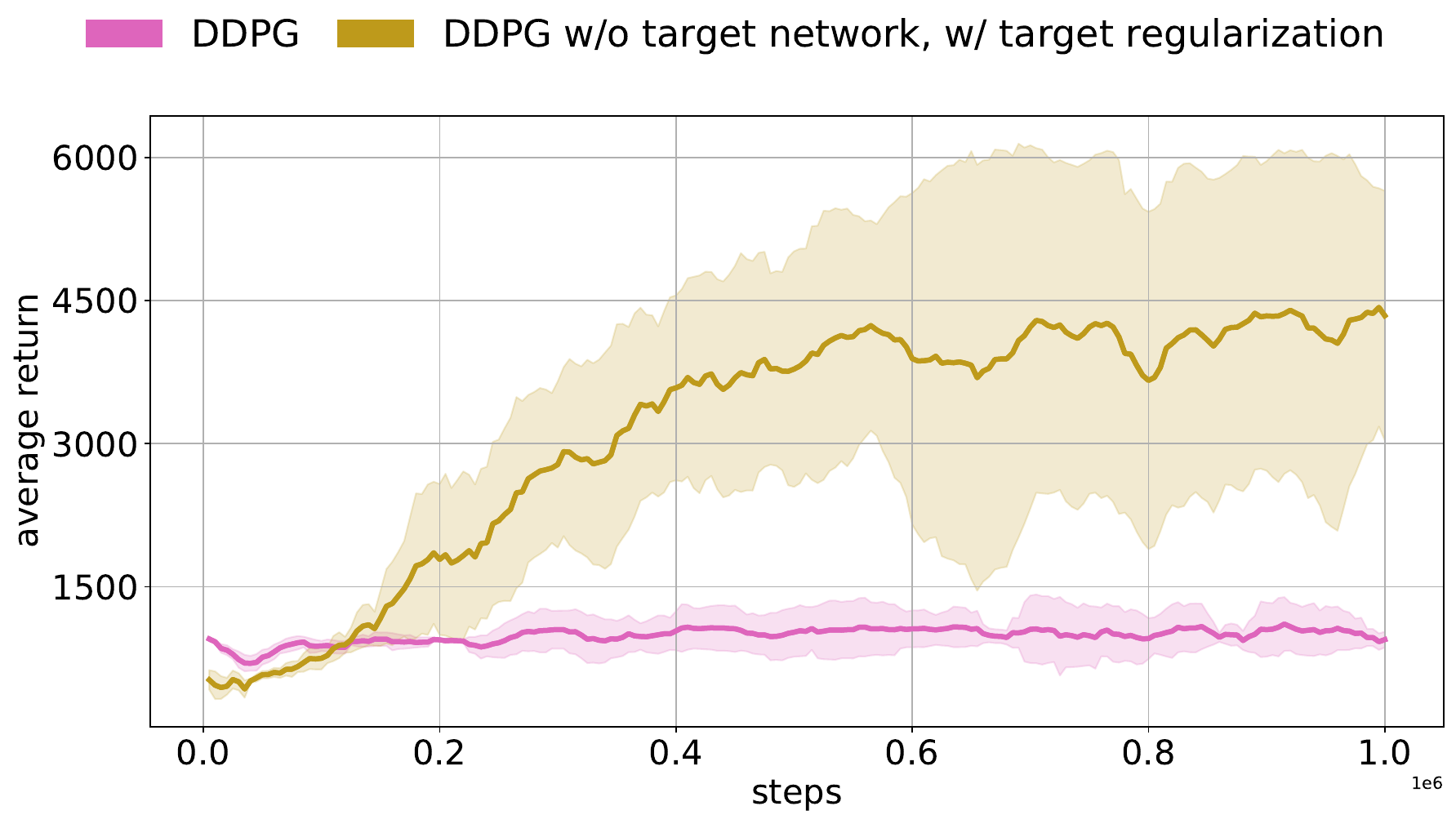}
    \centering
    \par\small{(a)~Returns on Ant-v2}
    \end{minipage}
&
    \begin{minipage}{.3\linewidth}
    \centering
        \includegraphics[width=.85\linewidth]{imgs/fig_7_hopper.pdf}
    \par\small{(b)~Returns on Hopper-v2}
   \end{minipage}
& 
    \begin{minipage}{.3\linewidth}
    \centering
        \includegraphics[width=.85\linewidth]{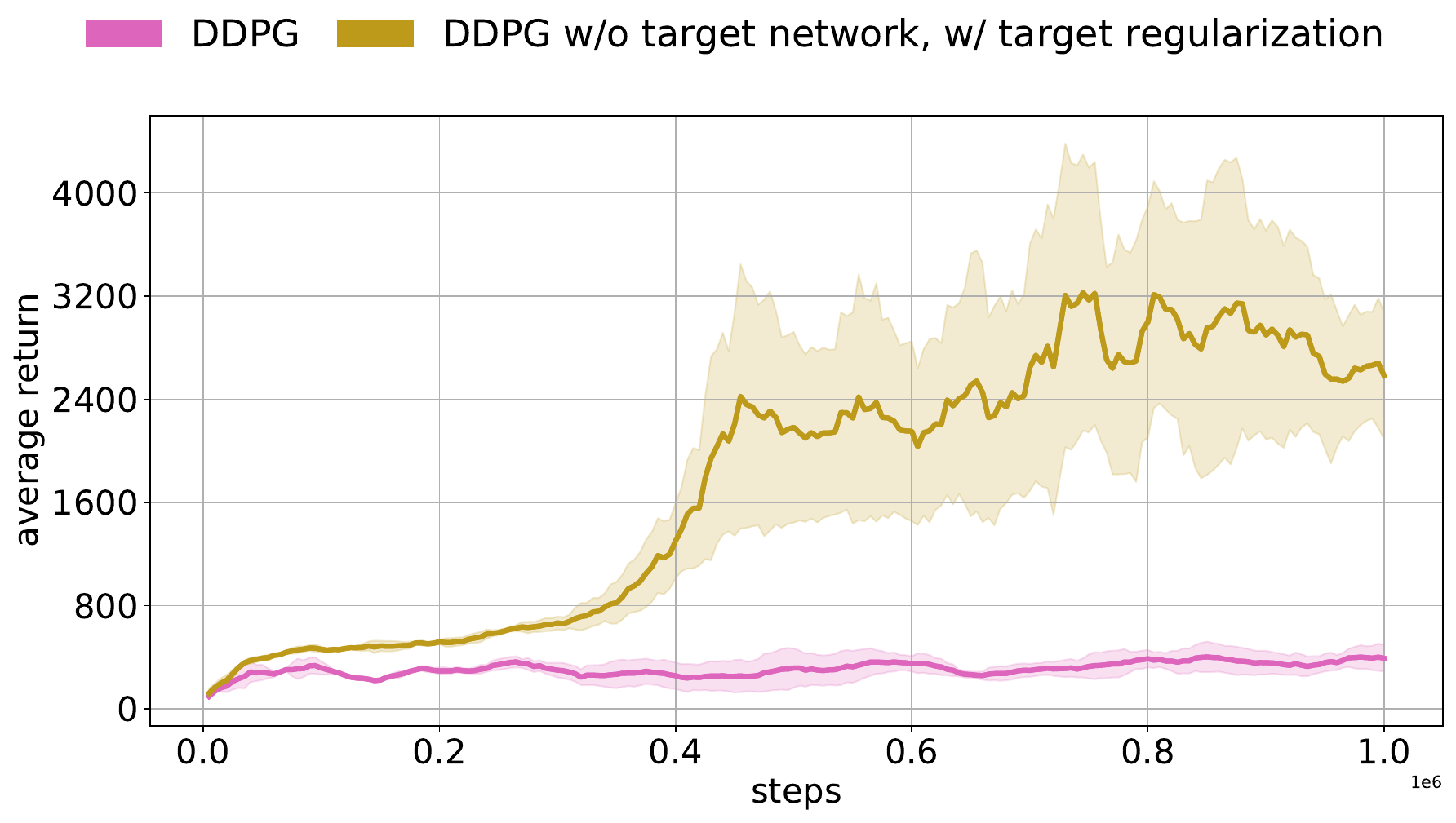}
    \par\small{(c)~Returns on Humanoid-v2}
    \end{minipage}
\end{tabular}

\begin{tabular}{ccc}
    \begin{minipage}{.3\linewidth}
    \centering
        \includegraphics[width=.85\linewidth]{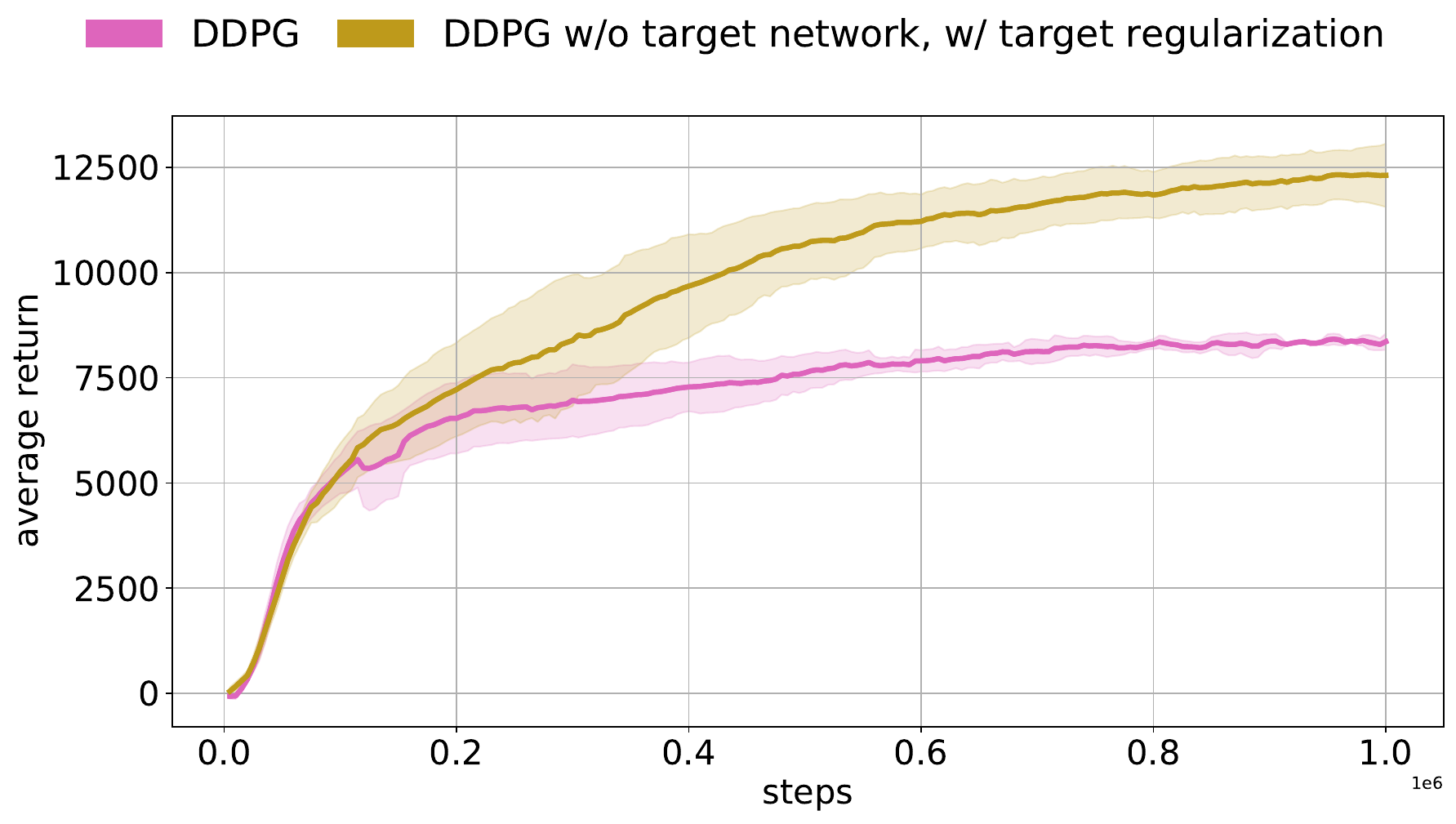}
    \par\small{(d)~Returns on HalfCheetah-v2}
    \end{minipage}
&
 	\begin{minipage}{.3\linewidth}
    \centering
        \includegraphics[width=.85\linewidth]{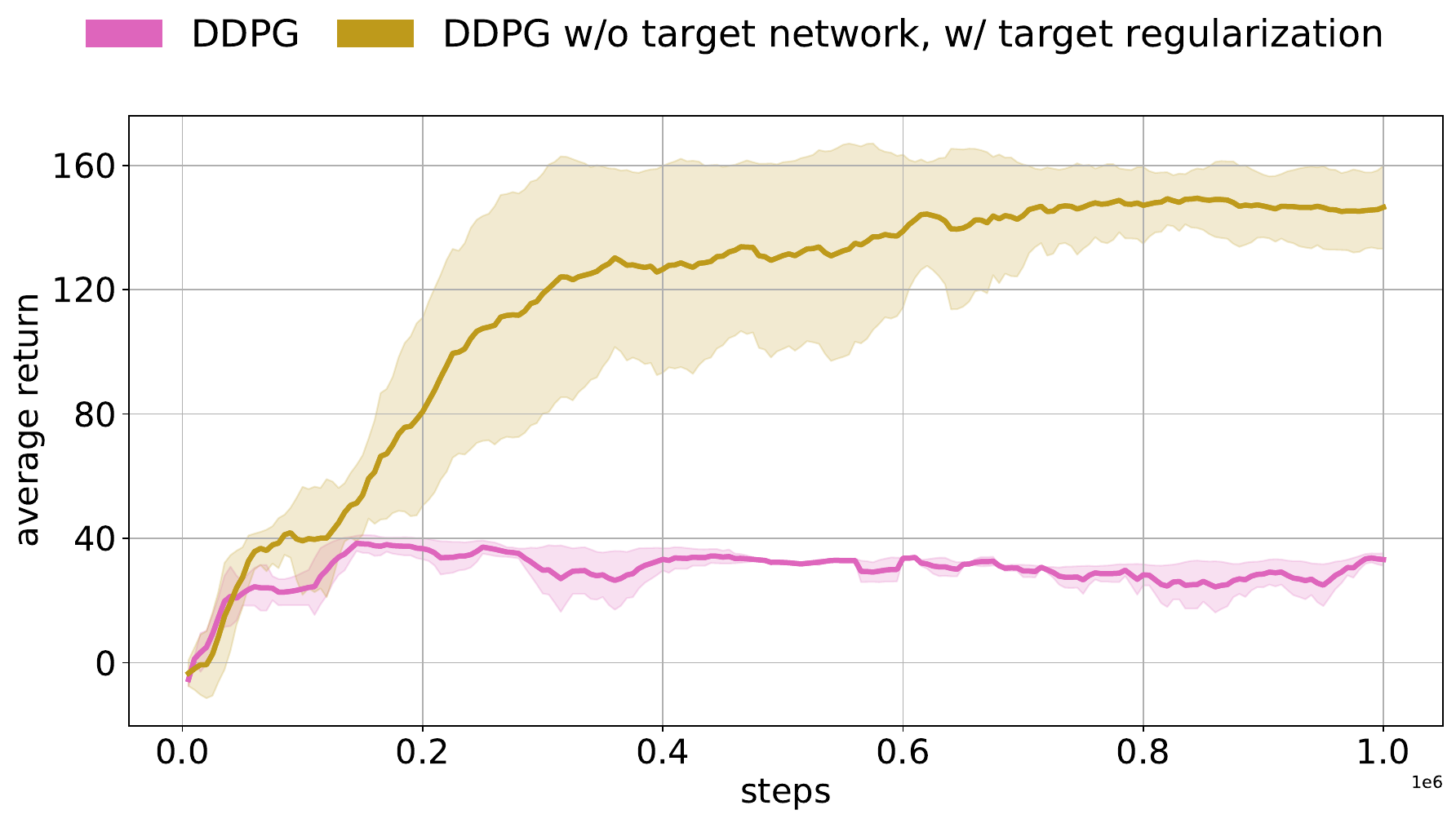}
    \par\small{(e)~Returns on Swimmer-v2}
   \end{minipage}
& 
    \begin{minipage}{.3\linewidth}
    \centering
        \includegraphics[width=.85\linewidth]{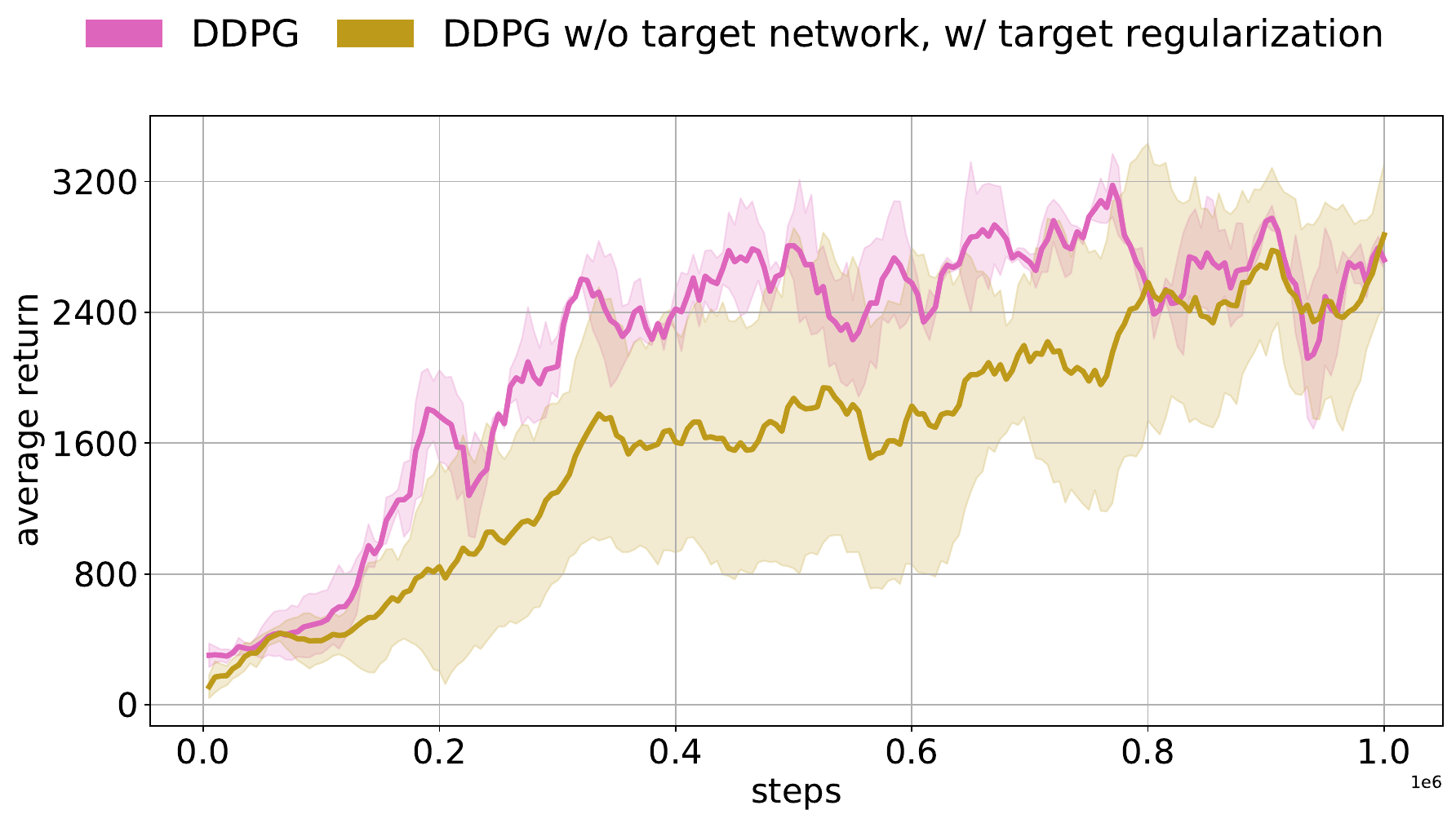}
    \par\small{(f)~Returns on Walker2d-v2}
    \end{minipage}\\ 
\end{tabular}
\caption{\label{appendix:fig:learning_crv}Learning curves of \emph{DDPG w/o target network, w/ target regularization} and \emph{DDPG} on the OpenAI gym continuous control tasks within one million steps over four random seeds. \emph{DDPG w/o target network, w/ target regularization} outperforms \emph{DDPG} by large margins in five out of six Mujoco tasks.} 
\end{figure}

\subsubsection{Hyperparameter Sensitivity for the Termination Condition of Critic Network Training}\label{appendsubsec:termination}
We also run experiments to examine how sensitive \emph{GRAC} is to some hyperparameters such as $K$ and $\alpha$ listed in Alg.1. The critic networks will be updated until the critic loss has decreased to $\alpha$ times the original loss, or at most $K$ iterations, before proceeding to update the actor network. In practice, we decrease $\alpha$ in the training process. Fig. 3 shows five learning curves on Ant-v2 running with five different hyperparameter values. We find that a moderate value of $K=10$ is enough to stabilize the training process, and increasing $K$ further does not have significant influence on training, shown on the right of Fig. 3. $\alpha$ is usually within the range of $[0.7,0.9]$ and most tasks are not sensitive to minor changes. However on the task of Swimmer-v2, we find that $\alpha$ needs to be small enough ($<0.7$) to prevent divergence. In practice, without appropriate $K$ and $\alpha$ values, divergence usually happens within the first 50k training steps, thus it is quick to select appropriate values for $K$ and $\alpha$.

\begin{figure*}[ht!]
\centering

\begin{tabular}{cc}
    \begin{minipage}{.45\textwidth}
    \includegraphics[width=.85\linewidth]{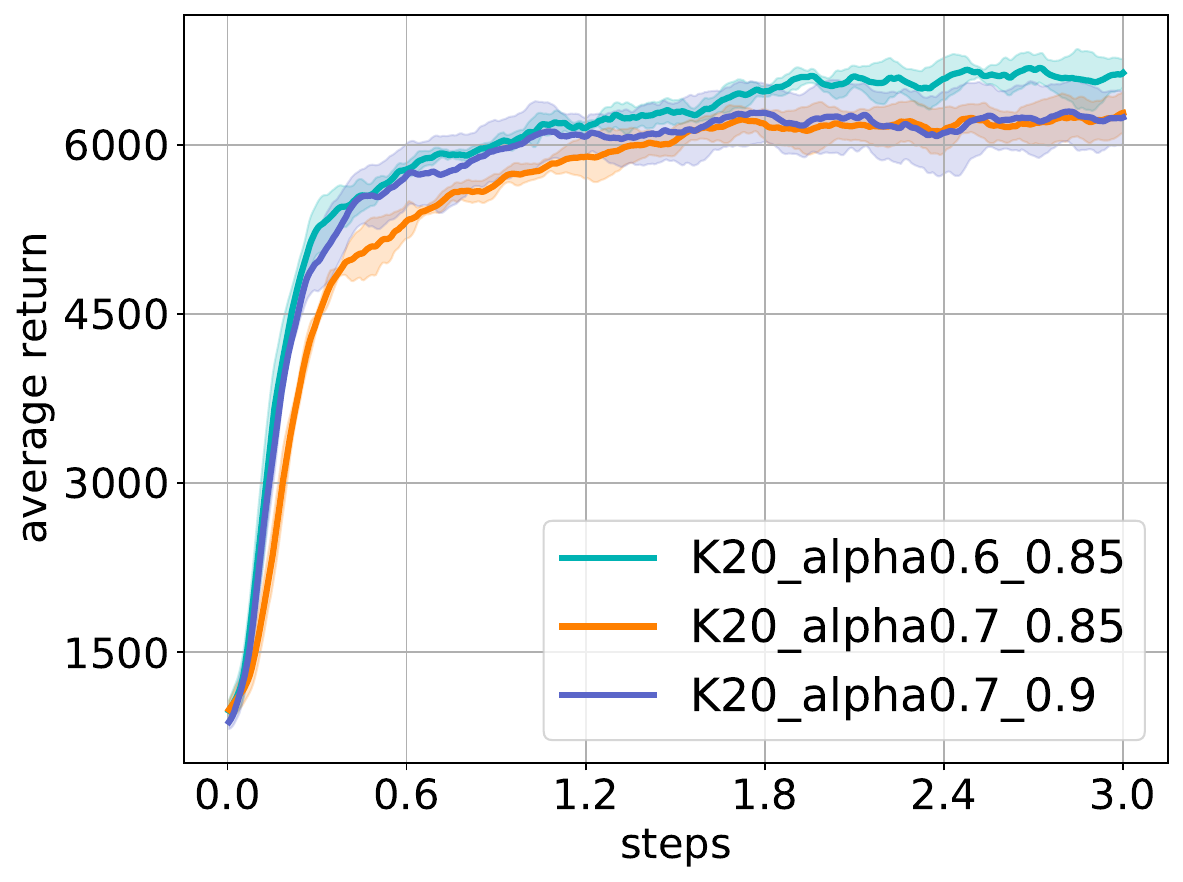}
    \centering
    \par\small{(a)~Returns on Ant-v2}
    \end{minipage}
&
    \begin{minipage}{.45\linewidth}
    \centering
        \includegraphics[width=.85\linewidth]{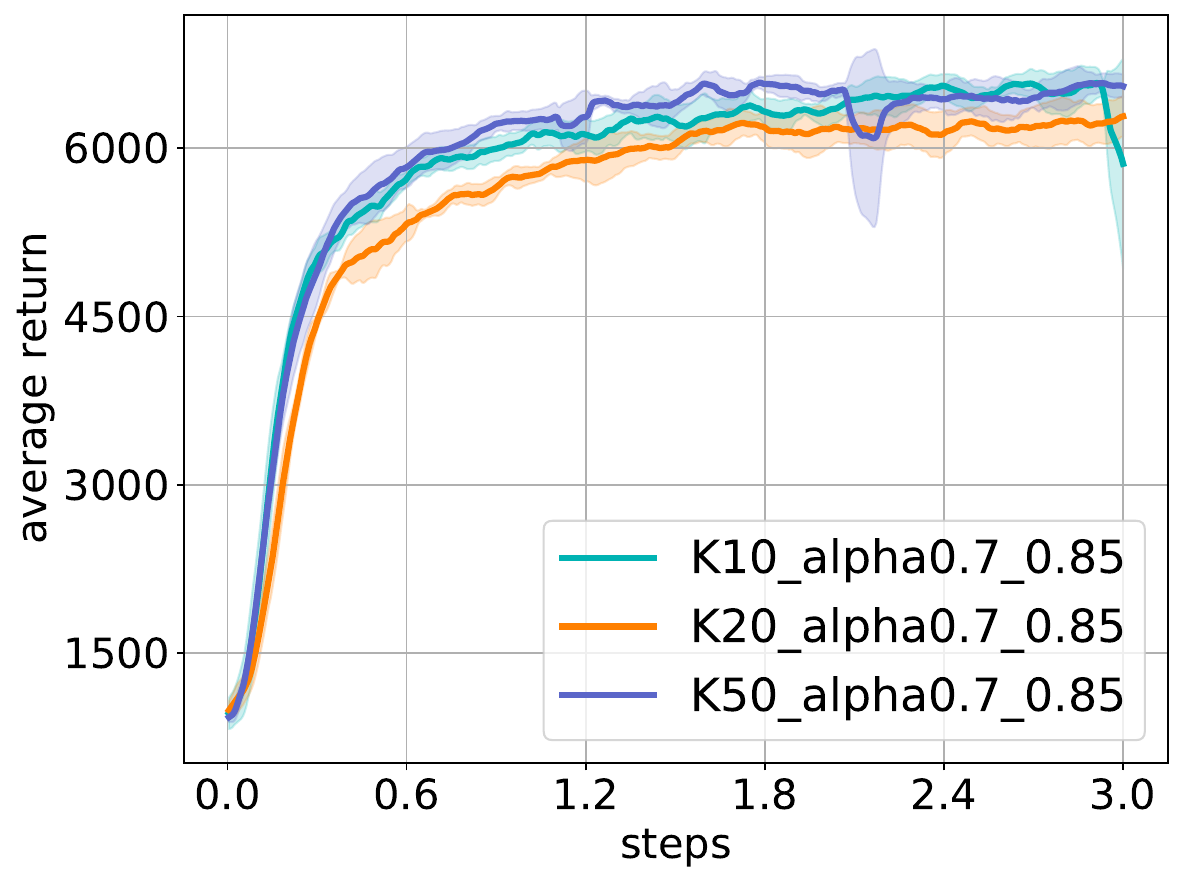}
    \par\small{(b)~Returns on Ant-v2}
   \end{minipage}
\end{tabular}
\caption{Learning curves for the OpenAI gym Ant-v2 environment.}\label{fig:results}
\end{figure*}

\subsection{Theorems and Proofs}
\label{appendsec:theorems}

For the sake of clarity, we make the following technical assumption about the function approximation capacity of neural networks that we use to approximate the action distribution. 

\textbf{State separation assumption:} 
The neural network chosen to approximate the policy family $\Pi$ is expressive enough to approximate the action distribution for each state $\pi(s,\cdot)$ separately.

\subsubsection{Theorem 1: \textbf{$Q$-loss Policy Improvement}}

\label{appendsubsec:theorem1}
\begin{theorem}
 
 Starting from the current policy $\pi$, we update the policy to maximize the objective $J_\pi = \E_{(s,a) \sim \rho_\pi(s,a)} Q^{\pi}(s,a)$. The maximization converges to a critical point denoted as $\pi_{new}$. Then the induced Q function, $Q^{\pi_{new}}$, satisfies $\forall (s,a), Q^{\pi_{new}}(s,a) \geq Q^{\pi}(s,a).$
\end{theorem}
\begin{proof}[Proof of Theorem 1]

Under the state separation assumption, the action distribution for each state, $\pi(s, \cdot)$, can be updated separately, for each state we are maximizing $\E_{a \sim \pi(s,\cdot)} Q^{\pi}(s,a)$. Therefore, we have $\forall s, \E_{a \sim \pi_{new}(s,\cdot)} Q^{\pi}(s,a) \geq \E_{a \sim \pi(s,\cdot)} Q^{\pi}(s,a) = V^{\pi}(s)$.

\begin{equation}
    \begin{array}{rl}
    Q^{\pi}(s,a) &= r(s,a) + \gamma \E_{s^{\prime}}V^{\pi}(s^{\prime}) \\
    &\leq r(s,a) + \gamma \E_{s^{\prime}} \E_{a^{\prime} \sim \pi_{new}} Q^{\pi}(s^{\prime}, a^{\prime}) \\
    &= r(s,a) + \gamma \E_{s^{\prime}} \E_{a^{\prime} \sim \pi_{new}} [r(s^{\prime}, a^{\prime}) + \gamma \E_{s^{\prime \prime}} V^{\pi}(s^{\prime \prime})] \\
    &\leq r(s,a) + \gamma \E_{s^{\prime}} \E_{a^{\prime} \sim \pi_{new}} r(s^{\prime}, a^{\prime}) + \gamma^2 \E_{s^{\prime}} \E_{a^{\prime} \sim \pi_{new}} \E_{s^{\prime \prime}} \E_{a^{\prime \prime} \sim \pi_{new}} Q^{\pi}(s^{\prime \prime}, a^{\prime \prime}) \\
    &= \ldots   \quad \mbox{(repeatedly unroll Q function )} \\
    &\leq Q^{\pi_{new}}(s,a)
    \end{array}
\end{equation}    

\end{proof}

    \subsubsection{Theorem 2: \textbf{\emph{CEM} Policy Improvement}}
    \label{appendsubsec:theorem2}
\begin{theorem}

We assume that the \emph{CEM} process is able to find the optimal action of the state-action value function, $a^*(s) = \argmax_{a}Q^{\pi}(s,a)$, where $Q^{\pi}$ is the Q function induced by the current policy $\pi$. 
By iteratively applying the update $ \E_{(s,a) \sim \rho_\pi(s,a)} [Q(s,a^*)-Q(s,a)]_{+}\nabla \log\pi(a^*|s)$, the policy converges to $\pi_{new}$. Then $Q^{\pi_{new}}$ satisfies $\forall (s,a), Q^{\pi_{new}}(s,a) \geq Q^{\pi}(s,a).$
\end{theorem}
\begin{proof}[Proof of Theorem 2]
Under the state separation assumption,  the action distribution for each state, $\pi(s, \cdot)$, can be updated separately. Then, for each state $s$, the policy $\pi_{new}$ will converge to a delta function at $a^*(s)$. Therefore we have $\forall s, \max_a Q^{\pi}(s,a) = \E_{a \sim \pi_{new}(s,\cdot)} Q^{\pi}(s,a) \geq \E_{a \sim \pi(s,\cdot)} Q^{\pi}(s,a) = V^{\pi}(s)$. Then, following Eq. (1) we have $\forall (s,a), Q^{\pi_{new}}(s,a) \geq Q^{\pi}(s,a)$
\end{proof}

    \subsubsection{Theorem 3: \textbf{Max-Min Double Q-learning Convergence}}
    \label{appendsubsec:theorem3}
\begin{theorem}

 We keep two tabular value estimates $Q_{1}$ and $Q_{2}$, and update via

\begin{equation}
    \begin{array}{rl}
        Q_{t+1, 1}(s,a) &= Q_{t,1}(s,a) + \alpha_t(s,a) (y_t-Q_{t,1}(s,a))\\
        Q_{t+1, 2}(s,a) &= Q_{t,2}(s,a) + \alpha_t(s,a) (y_t-Q_{t,2}(s,a)),
    \end{array}
\end{equation}
where $\alpha_t(s,a)$ is the learning rate and $y_t$ is the target:
\begin{equation}
    \begin{array}{cl}
        y_t & = r_t(s_t, a_t) + \gamma \max_{a' \in \{a^{\pi}, a^* \}} 
              \min_{i \in \{1, 2 \}}
              Q_{t,i}(s_{t+1}, a') \\
        a^{\pi} & \sim \pi(s_{t+1})\\
        a^*  & = argmax_{a'} Q_{t,2}(s_{t+1}, a') \\    \end{array}
\end{equation}



We assume that the MDP is finite and tabular and the variance of rewards are bounded, and $\gamma \in [0,1]$. We assume each state action pair is sampled an infinite number of times and both $Q_{1}$ and $Q_{2}$ receive an infinite number of updates. We further assume the learning rates satisfy $\alpha_t(s,a) \in [0,1]$, $\sum_t \alpha_t(s,a) = \infty$, $\sum_t [\alpha_t(s,a)]^2 < \infty$ with probability 1 and $\alpha_t(s,a)=0, \forall (s,a) \neq (s_t, a_t)$. Finally we assume \emph{CEM} is able to find the optimal action  $a^*(s) = \argmax_{a'}Q(s,a';\theta_2)$. Then Max-Min Double Q-learning will converge to the optimal value function $Q^*$ with probability $1$.
\end{theorem}
\begin{proof}[Proof of Theorem 3]
This proof will closely follow Appendix A of \cite{fujimoto2018addressing}.

We will first prove that $Q_2$ converges to the optimal Q value $Q^*$. Following notations of \cite{fujimoto2018addressing}, we have

\begin{equation*}
\begin{array}{rl}    
F_t(s_t,a_t) \triangleq& y_t(s_t,a_t) - Q^*(s_t, a_t) \\

    =& r_t + \gamma \max_{a^{\prime} \in \{a^{\pi}, a^* \}} 
              \min_{i \in \{ 1, 2 \}}
              Q_{t, i}(s_{t+1}, a^{\prime})
                - Q^*(s_t, a_t) \\
    =& F_t^Q(s_t, a_t) + c_t
\end{array}
\end{equation*}

Where 
\begin{eqnarray*}
F_t^Q(s_t, a_t) &=& r_t + \gamma Q_{t,2}(s_{t+1}, a^*) - Q^*(s_t, a_t) \\
&=& r_t + \gamma \max_{a^{\prime}} Q_{t,2}(s_{t+1}, a^{\prime}) -Q^*(s_t,a_t) \\
c_t &=& \gamma \max_{a' \in \{a^{\pi}, a^* \}} \min_{i \in \{ 1, 2 \}} Q_{t, i}(s_{t+1}, a') - \gamma Q_{t, 2}(s_{t+1}, a^*)
\end{eqnarray*}

$F^Q_t$ is associated with the optimum Bellman operator. It is well known that the optimum Bellman operator is a contractor, We need to prove $c_t$ converges to 0. 

Based on the update rules (Eq. (A2)), it is easy to prove that for any tuple $(s, a)$, $\Delta_t(s, a) = Q_{t, 1}(s,a) - Q_{t, 2}(s,a)$ converges to 0. This implies that $\Delta_t(s, a^\pi) = Q_{t,1}(s,a^\pi) - Q_{t,2}(s,a^\pi)$ converges to 0 and $\Delta_t(s, a^*) = Q_{t, 1}(s,a^*) - Q_{t, 2}(s,a^*)$ converges to 0. Therefore, $\min_{i \in \{ 1, 2 \}} Q_{t, i}(s, a) - Q_{t, 2}(s,a) \leq 0$ and the left hand side converges to zero, for $a \in {a^\pi, a^*}$.
Since we have $Q_{t, 2}(s, a^*) >= Q_{t, 2}(s, a^\pi)$, then 

\[
\min_{i \in \{1, 2\}} Q_{t, i}(s, a^*) \leq 
 \max_{a' \in \{a^{\pi}, a^* \}}  \min_{i \in \{ 1, 2 \}}
              Q_{t, i}(s, a') \leq Q_{t, 2}(s,a^*)
\]

Therefore $c_t = \gamma \max_{a' \in \{a^{\pi}, a^* \}}  \min_{ i \in \{ 1, 2 \}} Q_{t, i}(s, a') - Q_{t, 2}(s,a^*)$ converges to 0. And we proved $Q_{t, 2}$ converges to $Q^*$. 

Since for any tuple $(s, a)$, $\Delta_t(s, a) = Q_{t, 1}(s,a) - Q_{t, 2}(s,a)$ converges to 0, $Q_{t, 1}$ also converges to $Q^*$.
    \end{proof}

\subsection{Results on Atari Breakout}\label{appendsec:atari}
We apply self-regularized TD-learning method on DQN called \emph{DQN w/o target network w/ target regularization} on the Atari Breakout environment and it outperforms \emph{DQN} by 25\%. The learning curve over 50 million steps is shown in Fig.5. 
\begin{figure}[H]
\centering

\begin{tabular}{ccc}
    \begin{minipage}{.6\textwidth}
    \includegraphics[width=.85\linewidth]{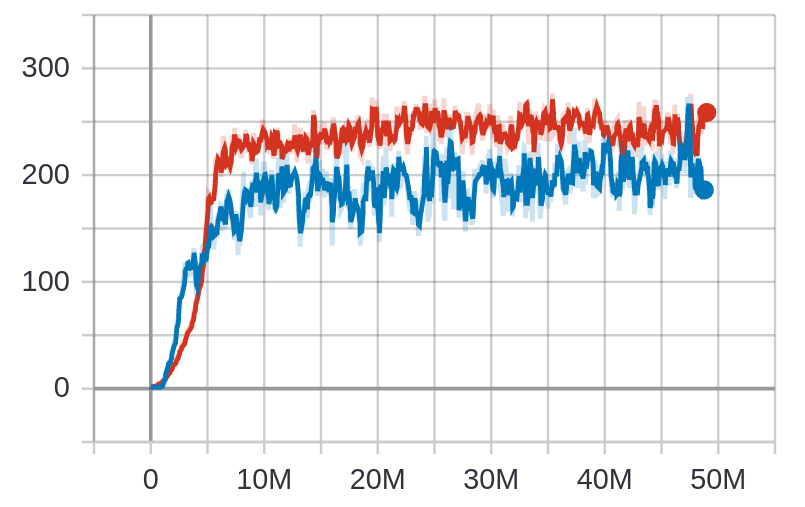}
    \centering
    \par\small{(a)~Returns on BreakoutNoFrameskip-v4}
    \end{minipage}
\end{tabular}
\caption{Average returns for the BreakoutNoFrameskip-v4 environment on OpenAI gym. Blue is \textbf{DQN}, red is \textbf{DQN w/o target network w/ target regularization.}}
\label{}
\end{figure}




\end{document}


\maketitle

\section{Implementation Details}
\subsection{Neural Network Architecture Details}
We use a two layer feedforward neural network of 256 and 256
hidden nodes respectively. Rectified linear units~(ReLU) are put after each layer for both the actor and critic except the last layer. For the last layer of the actor network, a tanh function is used as the activation function to squash the action range within $[-1,1]$. \emph{GRAC} then multiplies the output of the tanh function by \emph{max action} to transform [-1,1] into [-\emph{max action}, \emph{max action}]. The actor network outputs the mean and sigma of a Gaussian distribution.

\subsection{CEM Implementation}
Our CEM implementation is based on the CEM algorithm described in Pourchot~\etal~\cite{pourchot2018cemrl}.

\setcounter{algorithm}{1}

\begin{algorithm}[H]
\Require{Input: Q-function Q(s,a);  size of population $N_{pop}$; size of elite $N_{elite}$ where $N_{elite} \leq N_{pop}$; max iteration of CEM $N_{cem}$.}\\
\Initialization{Initialize the mean $\mu$ and covariance matrix $\Sigma$ from actor network predictions.}
\begin{algorithmic}[1]
	\For {$i=1...,N_{\text{cem}}$}
		\State Draw the current population set $\{a_{pop} \}$ of size $N_{pop}$ from $\mathcal{N}(\mu, \Sigma)$.
		\State Receive the $Q$ values $\{q_{pop}\} = \{Q(s,a) | a \in \{a_{pop}\} \}$.
	    \State Sort $\{q_{pop}\}$ in descending order.
 	    \State Select top $N_{elite}$ Q values and choose their corresponding $a_{pop}$ as elite $\{a_{elite}\}$. 
	    \State Calculate the mean $\mu$ and covariance matrix $\Sigma$ of the set  $\{ a_{elite} \}$.
 	\EndFor
\end{algorithmic} 
\Output{Output: The top one elite in the final iteration.}
\caption{CEM}
\label{alg:cem}
\end{algorithm}


\subsection{Additional Detail on Algorithm 1: GRAC}
The actor network outputs the mean and sigma of a Gaussian distribution. In Line 2 of Alg.1, the actor has to select action $a$ based on the state $s$. In the test stage, the actor directly uses the predicted mean as output action $a$. In the training stage, the actor first samples an action $\hat{a}$ from the predicted Gaussian distribution $\pi_{\phi}(s)$, then \emph{GRAC} runs \emph{CEM} to find a second action $\tilde{a}=\emph{CEM}(Q(s,\cdot;\theta_2),\pi_{\phi}(\cdot | s)$). \emph{GRAC} uses 
 $a=\argmax_{\{\tilde{a},\hat{a}\}}\{\min_{j=1,2} Q(s,\tilde{a};\theta_j), \min_{j=1,2} Q(s,\hat{a};\theta_j)\}$ as the final action.
 
\section{Appendix on Experiments}
\subsection{Hyperparameters used}
Table~\ref{tab:hyper} and Table~\ref{tab:hyper_env} list the hyperparameters used in the experiments. $[a,b]$ denotes a linear schedule from $a$ to $b$ during the training process. 
\begin{table}[H]
\centering
\begin{tabular}{@{}p{0.3\linewidth}p{0.15\linewidth}}
\hline \hline 
Parameters & Values  \\ \hline
\hline
discount $\gamma$ & 0.99\\
\hline
replay buffer size & 1e6\\
\hline
batch size & 256 \\
\hline
optimizer & Adam~\cite{kingma2014adam}\\
\hline 
learning rate in critic & 3e-4\\
\hline 
learning rate in actor & 2e-4\\
\hline
$N_{\text{cem}}$ & 2\\
\hline
$N_{\text{pop}}$ & 256\\
\hline
$N_{\text{elite}}$ & 5\\
\hline\hline
\end{tabular}
\caption{Hyperparameter Table}
\label{tab:hyper}
\end{table}

\begin{table}[H]
\centering
\begin{tabular}{@{}p{0.17\linewidth}p{0.12\linewidth} p{0.11\linewidth} p{0.11\linewidth} p{0.15\linewidth} p{0.14\linewidth}}

\hline \hline 
Environments & ActionDim & $K$ in Alg.1 & $\alpha$ in Alg.1 & CemLossWeight & Reward Scale \\ \hline
Ant-v2 &  8 & 20 & [0.7,~85] & 1.0/ActionDim & 1.0 \\
Hopper-v2 & 3 & 20 & [0.85,~0.95] & 0.3/ActionDim & 1.0\\
HalfCheetah-v2 & 6 & 50 & [0.7,~0.85] & 1.0/ActionDim & 0.5\\
Humanoid-v2 & 17 & 20 & [0.7,~0.85] & 1.0/ActionDim & 1.0\\
Swimmer-v2 & 2 & 20 & [0.5,~0.75] & 1.0/ActionDim & 1.0\\
Walker2d-v2 & 6 & 20 & [0.8,~0.9] & 0.3//ActionDim & 1.0\\
\hline\hline
\end{tabular}
\caption{Environment Specific Parameters}
\label{tab:hyper_env}
\end{table}

\subsection{Additional Learning Curves for Policy Improvement with Evolution Strategy}
\label{appendsec:experiment}

\begin{figure}[H]
\centering

\begin{tabular}{ccc}
    \begin{minipage}{.3\textwidth}
    \includegraphics[width=.85\linewidth]{imgs/fig_4_abl_actor_loss_return_ant.pdf}
    \centering
    \par\small{(a)~Returns on Ant-v2}
    \end{minipage}
&
    \begin{minipage}{.3\linewidth}
    \centering
        \includegraphics[width=.85\linewidth]{imgs/fig_4_abl_actor_loss_return_hopper.pdf}
    \par\small{(b)~Returns on Hopper-v2}
   \end{minipage}
& 
    \begin{minipage}{.3\linewidth}
    \centering
        \includegraphics[width=.85\linewidth]{imgs/fig_4_abl_actor_loss_return_humanoid.pdf}
    \par\small{(c)~Returns on Humanoid-v2}
    \end{minipage}
\end{tabular}

\begin{tabular}{ccc}
    \begin{minipage}{.3\linewidth}
    \centering
        \includegraphics[width=.85\linewidth]{imgs/fig_4_abl_actor_loss_return_halfcheetah.pdf}
    \par\small{(d)~Returns on HalfCheetah-v2}
    \end{minipage}
&
 	\begin{minipage}{.3\linewidth}
    \centering
        \includegraphics[width=.85\linewidth]{imgs/fig_4_abl_actor_loss_return_swimmer.pdf}
    \par\small{(e)~Returns on Swimmer-v2}
   \end{minipage}
& 
    \begin{minipage}{.3\linewidth}
    \centering
        \includegraphics[width=.85\linewidth]{imgs/fig_4_abl_actor_loss_return_walker2d.pdf}
    \par\small{(f)~Returns on Walker2d-v2}
    \end{minipage}\\ 
\end{tabular}
\caption{Learning curves for the OpenAI gym continuous control tasks. The \emph{GRAC} actor network uses a combination of two actor loss functions, denoted as QLoss and CEMLoss. \emph{QLoss Only} represents the actor network only trained with QLoss. \emph{CEM Loss Only} represents the actor network only trained with CEMLoss. In general \emph{GRAC} achieves a better performance compared to either using \emph{CEMLoss} or \emph{QLoss}.} 
\end{figure}

\subsection{Additional Learning Curves for Ablation Study of Self-Regularized TD Learning}
\begin{figure}[H]
\centering
\begin{tabular}{cc}
    \begin{minipage}{.55\textwidth}
    \centering
    \includegraphics[width=\linewidth]{imgs/fig_2_abl_three_loss_ant.pdf}
    \end{minipage}
\end{tabular}
\begin{tabular}{cc}
    \begin{minipage}{.3\textwidth}
    \centering
    \par\small{(a)~Returns on Ant-v2}
    \end{minipage}
&
    \begin{minipage}{.3\linewidth}
    \centering
    \par\small{(b)~Average of $Q_1$ over training batch on Ant-v2}
   \end{minipage}
 \end{tabular}
\centering
\begin{tabular}{cc}
    \begin{minipage}{.55\textwidth}
    \centering
    \includegraphics[width=\linewidth]{imgs/fig_2_abl_three_loss_halfcheetah.pdf}
    \end{minipage}
\end{tabular}
\begin{tabular}{cc}
    \begin{minipage}{.3\textwidth}
    \centering
    \par\small{(a)~Returns on HalfCheetah-v2}
    \end{minipage}
&
    \begin{minipage}{.3\linewidth}
    \centering
    \par\small{(b)~Average of $Q_1$ over training batch on HalfCheetah-v2}
   \end{minipage}
 \end{tabular}
\centering
\begin{tabular}{cc}
    \begin{minipage}{.55\textwidth}
    \centering
    \includegraphics[width=\linewidth]{imgs/fig_2_abl_three_loss_humanoid.pdf}
    \end{minipage}
\end{tabular}
\begin{tabular}{cc}
    \begin{minipage}{.3\textwidth}
    \centering
    \par\small{(a)~Returns on Humanoid-v2}
    \end{minipage}
&
    \begin{minipage}{.3\linewidth}
    \centering
    \par\small{(b)~Average of $Q_1$ over training batch on Humanoid-v2}
   \end{minipage}
 \end{tabular}
\centering
\begin{tabular}{cc}
    \begin{minipage}{.55\textwidth}
    \centering
    \includegraphics[width=\linewidth]{imgs/fig_2_abl_three_loss_walker2d.pdf}
    \end{minipage}
\end{tabular}
\begin{tabular}{cc}
    \begin{minipage}{.3\textwidth}
    \centering
    \par\small{(a)~Returns on Walker2d-v2}
    \end{minipage}
&
    \begin{minipage}{.3\linewidth}
    \centering
    \par\small{(b)~Average of $Q_1$ over training batch on Walker2d-v2}
   \end{minipage}
 \end{tabular}
\centering
\begin{tabular}{cc}
    \begin{minipage}{.55\textwidth}
    \centering
    \includegraphics[width=\linewidth]{imgs/fig_2_abl_three_loss_swimmer.pdf}
    \end{minipage}
\end{tabular}
\begin{tabular}{cc}
    \begin{minipage}{.3\textwidth}
    \centering
    \par\small{(a)~Returns on Swimmer-v2}
    \end{minipage}
&
    \begin{minipage}{.3\linewidth}
    \centering
    \par\small{(b)~Average of $Q_1$ over training batch on Swimmer-v2}
   \end{minipage}
 \end{tabular}
 \caption{Learning curves and average $Q_1$ values ($y^{\prime}_1$ in Alg. 1 of the main paper). \emph{DDPG} w/o target network quickly diverges as seen by the unrealistically high Q values. \emph{DDPG} is stable but often progresses slower. If we remove the target network and add the proposed target regularization, we both maintain stability and achieve a faster or comparable learning rate.}
\end{figure}


\begin{figure}[H]
\centering

\begin{tabular}{ccc}
    \begin{minipage}{.3\textwidth}
    \includegraphics[width=.85\linewidth]{imgs/fig7_ant.pdf}
    \centering
    \par\small{(a)~Returns on Ant-v2}
    \end{minipage}
&
    \begin{minipage}{.3\linewidth}
    \centering
        \includegraphics[width=.85\linewidth]{imgs/fig_7_hopper.pdf}
    \par\small{(b)~Returns on Hopper-v2}
   \end{minipage}
& 
    \begin{minipage}{.3\linewidth}
    \centering
        \includegraphics[width=.85\linewidth]{imgs/fig_7_humanoid.pdf}
    \par\small{(c)~Returns on Humanoid-v2}
    \end{minipage}
\end{tabular}

\begin{tabular}{ccc}
    \begin{minipage}{.3\linewidth}
    \centering
        \includegraphics[width=.85\linewidth]{imgs/fig_7_halfcheetah.pdf}
    \par\small{(d)~Returns on HalfCheetah-v2}
    \end{minipage}
&
 	\begin{minipage}{.3\linewidth}
    \centering
        \includegraphics[width=.85\linewidth]{imgs/fig_7_swimmer.pdf}
    \par\small{(e)~Returns on Swimmer-v2}
   \end{minipage}
& 
    \begin{minipage}{.3\linewidth}
    \centering
        \includegraphics[width=.85\linewidth]{imgs/fig_7_walker2d.pdf}
    \par\small{(f)~Returns on Walker2d-v2}
    \end{minipage}\\ 
\end{tabular}
\caption{Learning curves of \emph{DDPG w/o target network, w/ target regularization} and \emph{DDPG} on the OpenAI gym continuous control tasks within one million steps over four random seeds. \emph{DDPG w/o target network, w/ target regularization} outperforms \emph{DDPG} by large margins in five out of six Mujoco tasks.} 
\end{figure}

\subsection{Hyperparameter Sensitivity for the Termination Condition of Critic Network Training}\label{appendsubsec:termination}
We also run experiments to examine how sensitive \emph{GRAC} is to some hyperparameters such as $K$ and $\alpha$ listed in Alg.1. The critic networks will be updated until the critic loss has decreased to $\alpha$ times the original loss, or at most $K$ iterations, before proceeding to update the actor network. In practice, we decrease $\alpha$ in the training process. Fig. 3 shows five learning curves on Ant-v2 running with five different hyperparameter values. We find that a moderate value of $K=10$ is enough to stabilize the training process, and increasing $K$ further does not have significant influence on training, shown on the right of Fig. 3. $\alpha$ is usually within the range of $[0.7,0.9]$ and most tasks are not sensitive to minor changes. However on the task of Swimmer-v2, we find that $\alpha$ needs to be small enough ($<0.7$) to prevent divergence. In practice, without appropriate $K$ and $\alpha$ values, divergence usually happens within the first 50k training steps, thus it is quick to select appropriate values for $K$ and $\alpha$.

\begin{figure*}[ht!]
\centering

\begin{tabular}{cc}
    \begin{minipage}{.45\textwidth}
    \includegraphics[width=.85\linewidth]{imgs/fig_4_hyperparameter_1.pdf}
    \centering
    \par\small{(a)~Returns on Ant-v2}
    \end{minipage}
&
    \begin{minipage}{.45\linewidth}
    \centering
        \includegraphics[width=.85\linewidth]{imgs/fig_4_hyperparameter_2.pdf}
    \par\small{(b)~Returns on Ant-v2}
   \end{minipage}
\end{tabular}
\caption{Learning curves for the OpenAI gym Ant-v2 environment.}\label{fig:results}
\end{figure*}

\section{Theorems and Proofs}
\label{appendsec:theorems}

For the sake of clarity, we make the following technical assumption about the function approximation capacity of neural networks that we use to approximate the action distribution. 

\textbf{State separation assumption:} 
The neural network chosen to approximate the policy family $\Pi$ is expressive enough to approximate the action distribution for each state $\pi(s,\cdot)$ separately.

\subsection{Theorem 1: \textbf{$Q$-loss Policy Improvement}}

\label{appendsubsec:theorem1}
\begin{theorem}
 
 Starting from the current policy $\pi$, we update the policy to maximize the objective $J_\pi = \E_{(s,a) \sim \rho_\pi(s,a)} Q^{\pi}(s,a)$. The maximization converges to a critical point denoted as $\pi_{new}$. Then the induced Q function, $Q^{\pi_{new}}$, satisfies $\forall (s,a), Q^{\pi_{new}}(s,a) \geq Q^{\pi}(s,a).$
\end{theorem}
\begin{proof}[Proof of Theorem 1]

Under the state separation assumption, the action distribution for each state, $\pi(s, \cdot)$, can be updated separately, for each state we are maximizing $\E_{a \sim \pi(s,\cdot)} Q^{\pi}(s,a)$. Therefore, we have $\forall s, \E_{a \sim \pi_{new}(s,\cdot)} Q^{\pi}(s,a) \geq \E_{a \sim \pi(s,\cdot)} Q^{\pi}(s,a) = V^{\pi}(s)$.

\begin{equation}
    \begin{array}{rl}
    Q^{\pi}(s,a) &= r(s,a) + \gamma \E_{s^{\prime}}V^{\pi}(s^{\prime}) \\
    &\leq r(s,a) + \gamma \E_{s^{\prime}} \E_{a^{\prime} \sim \pi_{new}} Q^{\pi}(s^{\prime}, a^{\prime}) \\
    &= r(s,a) + \gamma \E_{s^{\prime}} \E_{a^{\prime} \sim \pi_{new}} [r(s^{\prime}, a^{\prime}) + \gamma \E_{s^{\prime \prime}} V^{\pi}(s^{\prime \prime})] \\
    &\leq r(s,a) + \gamma \E_{s^{\prime}} \E_{a^{\prime} \sim \pi_{new}} r(s^{\prime}, a^{\prime}) + \gamma^2 \E_{s^{\prime}} \E_{a^{\prime} \sim \pi_{new}} \E_{s^{\prime \prime}} \E_{a^{\prime \prime} \sim \pi_{new}} Q^{\pi}(s^{\prime \prime}, a^{\prime \prime}) \\
    &= \ldots   \quad \mbox{(repeatedly unroll Q function )} \\
    &\leq Q^{\pi_{new}}(s,a)
    \end{array}
\end{equation}    

\end{proof}

    \subsection{Theorem 2: \textbf{\emph{CEM} Policy Improvement}}
    \label{appendsubsec:theorem2}
\begin{theorem}

We assume that the \emph{CEM} process is able to find the optimal action of the state-action value function, $a^*(s) = \argmax_{a}Q^{\pi}(s,a)$, where $Q^{\pi}$ is the Q function induced by the current policy $\pi$. 
By iteratively applying the update $ \E_{(s,a) \sim \rho_\pi(s,a)} [Q(s,a^*)-Q(s,a)]_{+}\nabla \log\pi(a^*|s)$, the policy converges to $\pi_{new}$. Then $Q^{\pi_{new}}$ satisfies $\forall (s,a), Q^{\pi_{new}}(s,a) \geq Q^{\pi}(s,a).$
\end{theorem}
\begin{proof}[Proof of Theorem 2]
Under the state separation assumption,  the action distribution for each state, $\pi(s, \cdot)$, can be updated separately. Then, for each state $s$, the policy $\pi_{new}$ will converge to a delta function at $a^*(s)$. Therefore we have $\forall s, \max_a Q^{\pi}(s,a) = \E_{a \sim \pi_{new}(s,\cdot)} Q^{\pi}(s,a) \geq \E_{a \sim \pi(s,\cdot)} Q^{\pi}(s,a) = V^{\pi}(s)$. Then, following Eq. (1) we have $\forall (s,a), Q^{\pi_{new}}(s,a) \geq Q^{\pi}(s,a)$
\end{proof}

    \subsection{Theorem 3: \textbf{Max-Min Double Q-learning Convergence}}
    \label{appendsubsec:theorem3}
\begin{theorem}

 We keep two tabular value estimates $Q_{1}$ and $Q_{2}$, and update via

\begin{equation}
    \begin{array}{rl}
        Q_{t+1, 1}(s,a) &= Q_{t,1}(s,a) + \alpha_t(s,a) (y_t-Q_{t,1}(s,a))\\
        Q_{t+1, 2}(s,a) &= Q_{t,2}(s,a) + \alpha_t(s,a) (y_t-Q_{t,2}(s,a)),
    \end{array}
\end{equation}
where $\alpha_t(s,a)$ is the learning rate and $y_t$ is the target:
\begin{equation}
    \begin{array}{cl}
        y_t & = r_t(s_t, a_t) + \gamma \max_{a' \in \{a^{\pi}, a^* \}} 
              \min_{i \in \{1, 2 \}}
              Q_{t,i}(s_{t+1}, a') \\
        a^{\pi} & \sim \pi(s_{t+1})\\
        a^*  & = argmax_{a'} Q_{t,2}(s_{t+1}, a') \\    \end{array}
\end{equation}



We assume that the MDP is finite and tabular and the variance of rewards are bounded, and $\gamma \in [0,1]$. We assume each state action pair is sampled an infinite number of times and both $Q_{1}$ and $Q_{2}$ receive an infinite number of updates. We further assume the learning rates satisfy $\alpha_t(s,a) \in [0,1]$, $\sum_t \alpha_t(s,a) = \infty$, $\sum_t [\alpha_t(s,a)]^2 < \infty$ with probability 1 and $\alpha_t(s,a)=0, \forall (s,a) \neq (s_t, a_t)$. Finally we assume \emph{CEM} is able to find the optimal action  $a^*(s) = \argmax_{a'}Q(s,a';\theta_2)$. Then Max-Min Double Q-learning will converge to the optimal value function $Q^*$ with probability $1$.
\end{theorem}
\begin{proof}[Proof of Theorem 3]
This proof will closely follow Appendix A of \cite{fujimoto2018addressing}.

We will first prove that $Q_2$ converges to the optimal Q value $Q^*$. Following notations of \cite{fujimoto2018addressing}, we have

\begin{equation*}
\begin{array}{rl}    
F_t(s_t,a_t) \triangleq& y_t(s_t,a_t) - Q^*(s_t, a_t) \\

    =& r_t + \gamma \max_{a^{\prime} \in \{a^{\pi}, a^* \}} 
              \min_{i \in \{ 1, 2 \}}
              Q_{t, i}(s_{t+1}, a^{\prime})
                - Q^*(s_t, a_t) \\
    =& F_t^Q(s_t, a_t) + c_t
\end{array}
\end{equation*}

Where 
\begin{eqnarray*}
F_t^Q(s_t, a_t) &=& r_t + \gamma Q_{t,2}(s_{t+1}, a^*) - Q^*(s_t, a_t) \\
&=& r_t + \gamma \max_{a^{\prime}} Q_{t,2}(s_{t+1}, a^{\prime}) -Q^*(s_t,a_t) \\
c_t &=& \gamma \max_{a' \in \{a^{\pi}, a^* \}} \min_{i \in \{ 1, 2 \}} Q_{t, i}(s_{t+1}, a') - \gamma Q_{t, 2}(s_{t+1}, a^*)
\end{eqnarray*}

$F^Q_t$ is associated with the optimum Bellman operator. It is well known that the optimum Bellman operator is a contractor, We need to prove $c_t$ converges to 0. 

Based on the update rules (Eq. (A2)), it is easy to prove that for any tuple $(s, a)$, $\Delta_t(s, a) = Q_{t, 1}(s,a) - Q_{t, 2}(s,a)$ converges to 0. This implies that $\Delta_t(s, a^\pi) = Q_{t,1}(s,a^\pi) - Q_{t,2}(s,a^\pi)$ converges to 0 and $\Delta_t(s, a^*) = Q_{t, 1}(s,a^*) - Q_{t, 2}(s,a^*)$ converges to 0. Therefore, $\min_{i \in \{ 1, 2 \}} Q_{t, i}(s, a) - Q_{t, 2}(s,a) \leq 0$ and the left hand side converges to zero, for $a \in {a^\pi, a^*}$.
Since we have $Q_{t, 2}(s, a^*) >= Q_{t, 2}(s, a^\pi)$, then 

\[
\min_{i \in \{1, 2\}} Q_{t, i}(s, a^*) \leq 
 \max_{a' \in \{a^{\pi}, a^* \}}  \min_{i \in \{ 1, 2 \}}
              Q_{t, i}(s, a') \leq Q_{t, 2}(s,a^*)
\]

Therefore $c_t = \gamma \max_{a' \in \{a^{\pi}, a^* \}}  \min_{ i \in \{ 1, 2 \}} Q_{t, i}(s, a') - Q_{t, 2}(s,a^*)$ converges to 0. And we proved $Q_{t, 2}$ converges to $Q^*$. 

Since for any tuple $(s, a)$, $\Delta_t(s, a) = Q_{t, 1}(s,a) - Q_{t, 2}(s,a)$ converges to 0, $Q_{t, 1}$ also converges to $Q^*$.
    \end{proof}

\section{Results on Atari Breakout}
We apply self-regularized TD-learning method on DQN called \emph{DQN w/o target network w/ target regularization} on the Atari Breakout environment and it outperforms \emph{DQN} by 25\%. The learning curve over 50 million steps is shown in Fig.5. 
\begin{figure}[H]
\centering

\begin{tabular}{ccc}
    \begin{minipage}{.6\textwidth}
    \includegraphics[width=.85\linewidth]{imgs/breakout.png}
    \centering
    \par\small{(a)~Returns on BreakoutNoFrameskip-v4}
    \end{minipage}
\end{tabular}
\caption{Average returns for the BreakoutNoFrameskip-v4 environment on OpenAI gym. Blue is \textbf{DQN}, red is \textbf{DQN w/o target network w/ target regularization.}}
\label{}
\end{figure}

\bibliography{example_paper}
\bibliographystyle{plain}